\def\eqref#1{equation~\ref{#1}}
\def\1{\bm{1}}
\DeclareMathAlphabet{\mathsfit}{\encodingdefault}{\sfdefault}{m}{sl}
\SetMathAlphabet{\mathsfit}{bold}{\encodingdefault}{\sfdefault}{bx}{n}
\newtheorem{theorem}{Theorem}
\newtheorem*{theorem*}{Theorem}
\newtheorem{lemma}{Lemma}
\newtheorem{proposition}{Proposition}
\newcommand{\cmark}{\textcolor{ForestGreen}{\ding{51}}}
\newcommand{\xmark}{\textcolor{BrickRed}{\ding{55}}}
\definecolor{boxborder}{named}{Orange}
\definecolor{boxbackground}{named}{Peach}
\newtcolorbox{boxK}{
    sharpish corners, 
    boxrule = 0pt,
    toprule = 4.5pt, 
    enhanced,
    fuzzy shadow = {0pt}{-2pt}{-0.5pt}{0.5pt}{black!35} 
}
\newtcolorbox{takeawaybox}[2][]{
    enhanced,
    boxsep = 2pt, 
    left = 2pt, 
    right = 2pt, 
    top = 2pt, 
    bottom = 2pt, 
    #1                       
}
\title{DRIFT: Learning from Abundant User Dissatisfaction in Real-World Preference Learning}
\author{
Yifan Wang$^{1}$,
Bolian Li$^{1}$,
Junlin Wu$^{2}$,
Zhaoxuan Tan$^{3}$,
Zheli Liu$^{4}$,
Ruqi Zhang$^{1}$,\\
\textbf{Ananth Grama$^{1}$},
\textbf{Qingkai Zeng$^{4}$\thanks{Corresponding author}} \\
\\
$^{1}$Department of Computer Science, Purdue University \\
$^{2}$Department of Computer Science and Engineering, Washington University in St. Louis \\
$^{3}$Department of Computer Science and Engineering, University of Notre Dame \\
$^{4}$College of Computer Science, Nankai University \\
\\
\texttt{\{wang5617, li4468, ruqiz, ayg\}@purdue.edu, junlin.wu@wustl.edu}\\
\texttt{ztan3@nd.edu, liuzheli@nankai.edu.cn, qzengnkcs@gmail.com} 
}
\begin{document}

\maketitle

\begin{abstract}

Real-world large language model deployments (e.g., conversational AI systems, code generation assistants) naturally generate abundant implicit user dissatisfaction (DSAT) signals, as users iterate toward better answers through refinements, corrections, and expressed preferences, while explicit satisfaction (SAT) feedback is scarce. Existing preference learning approaches are poorly aligned with this data profile, as they rely on costly human annotations or assume plentiful positive responses. In this paper, we introduce \textbf{DRIFT} (\textbf{D}issatisfaction-\textbf{R}efined \textbf{I}terative pre\textbf{F}erence \textbf{T}raining), which anchors training on real-world DSAT signals and samples positives dynamically from the evolving policy. Empirically, DRIFT models trained on real-world \textit{WildFeedback} datasets and synthetic \textit{UltraFeedback} datasets achieve up to +6.23\% (7B) / +7.61\% (14B) on WildBench Task Score and up to +8.95\% (7B) / +12.29\% (14B) on AlpacaEval2 win rate over base models, outperforming strong baseline methods such as iterative DPO and SPIN. At larger scales, the improvements are particularly pronounced: 14B models trained with DRIFT surpass GPT-4o-mini on WildBench. Further analysis shows that DRIFT also preserves exploratory capacity, yielding more diverse high-reward solutions rather than collapsing to narrow subsets. Theoretically, we demonstrate that this design preserves preference margins and avoids the gradient degeneration. These results show that DRIFT is an effective and scalable recipe for real-world post-training that leverages the most abundant and informative signal. Code and data are available at \url{https://github.com/cacayaya/DRIFT.git}.

\end{abstract}

\section{Introduction}

Large language models (LLMs) now power a wide range of real-world applications, including conversational assistants (e.g., GPT, Claude, Gemini), customer support, search and recommendation, productivity and education tools, and code generation. A key driver of this success is preference learning, a critical component of post-training that aligns model behavior  with human judgments and values. Reinforcement Learning from Human Feedback (RLHF) \citep{ouyang2022training} pioneered this approach by training a reward model on human preference data and subsequently optimizing the policy using reinforcement learning algorithms \citep{schulman2017proximal}. Direct Preference Optimization (DPO) \citep{rafailov2023direct} simplified this process by directly optimizing on preference pairs without requiring an explicit reward model, making the training procedure more stable and computationally efficient, while achieving comparable alignment performance. 

However, these approaches depend on costly, carefully curated human preference annotations that are difficult to scale across domains and evolving user needs.In contrast, deployed LLM systems continuously generate vast amounts of real-world interaction data. Beyond offering scalability, such real data often captures a richer and more nuanced spectrum of human preferences than small, curated annotation datasets, as users naturally convey satisfaction, dissatisfaction, and refinement intents during conversation.
This motivates a key question: 


\begin{quote}
\textit{How can we transform abundant but implicit user feedback from real-world interactions into scalable and effective preference learning signals for LLMs?}
\end{quote}

From a data collection perspective, existing chatbot platform as shown in Figure~\ref{fig:intro} (left) attempts to gather user feedback through explicit mechanisms: (1) asking users to compare and rank multiple model responses, or (2) providing simple feedback buttons (e.g. thumbs up/down) at the end of chat interfaces. However, these collection methods are inefficient, as most are passive consumers~\citep{lounamaa2024explicit}, with only 1–3\% users willing to provide explicit feedback. Moreover, those who do provide feedback often express extreme opinions (strongly positive or negative) that may not reflect the broader distribution of user preferences. However, as illustrated by the example in Figure~\ref{fig:intro}, (middle) users naturally express their preferences through the conversation itself through follow-up questions, correction requests, and iterative refinements, creating a rich source of implicit feedback. Beyond scalability, such real interaction data can contain richer and more representative preference information than curated annotation datasets, capturing fine-grained user intents that explicit labels often miss.
Recent datasets such as WildChat-1M \citep{zhao2024wildchat1mchatgptinteraction} and LMSYS-Chat-1M \citep{zheng2024lmsyschat1mlargescalerealworldllm} have collected over one million real-world conversations, creating a rich foundation for studying naturally occurring user feedback. Building on these resources, several studies have explored ways to extract preference signals directly from user interactions. For example, \citet{donyehiya2025naturallyoccurringfeedbackcommon} demonstrate that naturally occurring user feedback appears in approximately 30\% of conversations and propose mining preferences by detecting explicit evaluative user responses. Similarly, \textit{WildFeedback} \citep{shi2025wildfeedbackaligningllmsinsitu} applies user satisfaction estimation \citep{lin2024interpretable} to automatically extract satisfaction and dissatisfaction labels to construct large-scale preference datasets.

\begin{figure}
    \centering
    \includegraphics[width=1\linewidth]{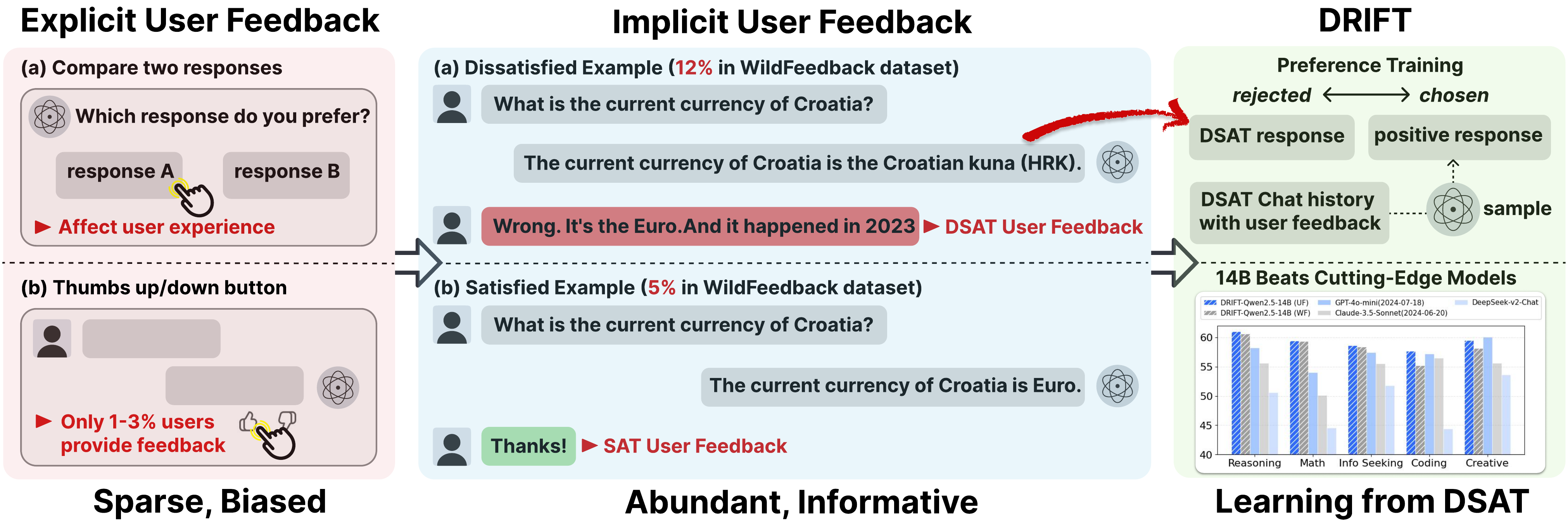}
    \vspace{-0.4cm}
    \caption{Overview of user feedback signals and the DRIFT framework. Explicit feedback (left) is sparse and biased, as most users are passive consumers. In contrast, implicit feedback (middle) provides abundant and informative signals, where dissatisfaction (DSAT) is far more prevalent than satisfaction (SAT) (e.g., 12\% vs 5\% in the \textit{WildFeedback} dataset). DRIFT (right) leverages these DSAT signals for preference learning, enabling our 14B model to surpass commercial models.}
    \label{fig:intro}
    \vspace{-0.2cm}
\end{figure}


From a preference learning method perspective, recent works explored self-generated strategies to reduce reliance on human annotation. Self-Rewarding Language Models \citep{yuan2024self} prompt the training model itself to score its own rollouts, but face a key limitation: the synchronized improvement of \emph{chosen} and \emph{rejected} responses progressively reduces their contrast, which in turn undermines effective preference learning \citep{wang2025temporalselfrewardinglanguagemodels}. An alternative approach, SPIN \citep{chen2024selfplayfinetuningconvertsweak} treats ground-truth responses from the SFT dataset as \emph{chosen} and self-generated ones as \emph{rejected}, yet is difficult to apply in practice where gold-standard responses are often rare, limiting its ability to generalize to broader scenarios. In contrast to positive feedback, dissatisfaction signals are naturally abundant as users refine suboptimal outputs through interaction. To leverage this underutilized signal, we introduce \textbf{DRIFT} (\textbf{D}issatisfaction-\textbf{R}efined \textbf{I}terative pre\textbf{F}erence \textbf{T}raining), a simple yet scalable method that directly leverages user dissatisfaction (DSAT) signals from authentic conversations to iteratively enhance model performance. Unlike SPIN, which fixes supervised responses as positives and treats self-generated ones as negatives, DRIFT anchors each training pair with a real DSAT negative and samples the \emph{chosen} responses from the current policy, enabling dynamic and policy-aligned adaptation. Our contributions are:

\begin{itemize}[leftmargin=*]
\item \textbf{Empirical Validation}: DRIFT consistently surpasses other iterative self-improving methods, SPIN and IterDPO, yielding gains of up to +6.23\% (7B) / +7.61\% (14B) in WildBench Task Score and up to +8.95\% (7B) / +12.29\% (14B) in AlpacaEval2 win rate over base models.
\item \textbf{Enhanced Exploration}: DRIFT preserves a larger exploration space and generates more diverse responses with substantially broader coverage of the global high-reward region.
\item \textbf{Theoretical Analysis}: We show that DRIFT maintains a non-vanishing expected preference margin and prevents gradient collapse, which is a critical limitation in existing self-improving models.
\end{itemize}

\section{Related Work}

\subsection{Learning from Real-World User Feedback} 

Since relying on human labeling is not only expensive and time-consuming, but also highly subjective to a small set of annotators, recent work has shifted toward leveraging naturally occurring signals ``in the wild''. 
A natural starting point is to incorporate the most explicit feedback of user input, including edits and demonstrations.  
\cite{gao2024aligningllmagentslearning} use edits in writing assistant settings to infer latent preferences, keeping the base LLM frozen and training a separate preference module that conditions future outputs.  
Similarly, \cite{shaikh2025aligninglanguagemodelsdemonstrated, tucker2024coactive} rely on a handful of user-provided demonstrations to bootstrap alignment, iteratively generating comparison pairs by treating user examples as preferred over LLM outputs and their intermediate revisions.  
Another stream of work draws on implicit feedback that emerges naturally during conversations.  
\cite{hancock-etal-2019-learning} introduced a self-feeding chatbot that monitors user satisfaction during deployment: satisfied turns are added as new training data, while explicit feedback is requested when dissatisfaction is detected.  
\cite{liu2025userfeedbackhumanllmdialogues} extended this idea, regenerating improved responses for dissatisfaction and applying them in supervised fine-tuning (SFT). While this provides some benefit on short tasks like MT-Bench \citep{Bai_2024}, gains are limited on more complex real-world task benchmarks such as WildBench \citep{lin2024wildbenchbenchmarkingllmschallenging}.  
Building further on implicit signals, recent approaches transform them into pairwise preferences for direct optimization.  
\cite{shi2025wildfeedbackaligningllmsinsitu} identify dissatisfaction with GPT-4, summarize user preferences, and generate improved responses as \emph{chosen} answers, contrasting them with the original unsatisfactory replies as \emph{rejected}.  
\cite{tan2025aligninglargelanguagemodels} follow a similar philosophy by extracting reader-centric questions from user-generated content, sampling multiple candidate answers with an LLM, and ranking them with a reward model to construct chosen–rejected pairs.  
In contrast, our approach requires no positive responses from stronger models, no reward model, and certainly no human-provided golden truth, relying solely on abundant real-world dissatisfaction (DSAT) signals and dynamic positives from the evolving policy.

\subsection{Self-Improvement and Iterative Direct Preference Optimization}
Self-improvement strategies have emerged as an important avenue for iteratively enhancing model performance. SPIN \citep{chen2024selfplayfinetuningconvertsweak} formulates this framework by treating the previous iteration model as the opponent and the current iteration model as the main player, constructing preference data with the SFT response as the \emph{chosen} response and the prior iteration’s response as the \emph{rejected} response, thereby fully utilizing the SFT data without requiring additional human annotation. However, pairing fixed ground-truth positives with self-generated negatives has been shown to risk reward hacking and model collapse \citep{wang2025more}.
Beyond this, Iterative DPO \citep{xiong2024iterativepreferencelearninghuman, xu2024thingscringeothersiterative} and Self-Rewarding Language Models \citep{yuan2024self} and its variants \citep{pang2024iterativereasoningpreferenceoptimization, chen2024bootstrapping, zeng2025aries, tu2025enhancingllmreasoningiterative,chen2024learningreasonselfiterativeprocess} explore generating on policy preference data via ranking responses by the model itself or a reward model / verifier and then conducting iterative DPO training. However, subsequent studies reveal that self-improving models face a critical limitation: the chosen and rejected responses can become too similar, leading to weak preference signals. To address this, Temporal Self-Rewarding LMs \citep{wang2025temporalselfrewardinglanguagemodels} decouple chosen and rejected responses through past-future anchoring, while CREAM \citep{wang2025creamconsistencyregularizedselfrewarding} introduces consistency regularization to stabilize the preference signal. 
Our method naturally avoids this issue by anchoring on genuine DSAT negatives and sampling fresh positives from the evolving policy, thereby maintaining a non-vanishing preference margin and preventing gradient collapse.

\section{\textbf{DRIFT}: \textbf{D}issatisfaction-\textbf{R}efined \textbf{I}terative pre-\textbf{F}erence \textbf{T}raining}
\label{sec:drift}

User feedback in real-world systems is inherently asymmetric, while satisfied users rarely provided  explicit positive responses, dissatisfied users are more likely to offer abundant and detailed feedback in the form of complaints, corrections, and stated preferences. As a result, dissatisfaction (DSAT) signals are not only more frequent but also richer in information than satisfaction (SAT) signals. Instead of viewing this imbalance as a limitation, DRIFT exploits it by treating authentic dissatisfaction as high-quality negative supervision, while generating positive feedback dynamically from the evolving model itself.

Our approach is motivated by two key insights:
\begin{itemize}[leftmargin=*]
\item \textbf{Genuine dissatisfaction reflects real deployment failure modes}, offering more informative and reliable supervision than synthetically constructed negatives.
\item \textbf{Iteratively sampling fresh positives from the current policy} maintains the margin between chosen and rejected responses, thus mitigating the gradient collapse that plagues most self-improvement methods as these responses become increasingly similar over time.

\end{itemize}

Formally, let $\mathcal{X}$ denote the prompt space and $\mathcal{Y}$ the response space. 
The current model is $\pi_\theta: \mathcal{X} \times \mathcal{Y} \to (0,1)$, and $\pi_{\mathrm{ref}}$ is a frozen reference model. 
Let $\mathcal{X}_{\mathrm{DSAT}} \subseteq \mathcal{X}$ denote prompts with observed dissatisfaction signals. 
For each $x \in \mathcal{X}_{\mathrm{DSAT}}$, we observe a set of negative responses:
\begin{align}
\text{DSAT}(x) &= \{\,y^{-} : \text{user expressed dissatisfaction}\}.
\end{align}

DRIFT proceeds in iterative refinement cycles, where each round builds upon the improved policy from the previous iteration (Algorithm~\ref{alg:drift}). We begin by filtering the wild dataset to extract dissatisfaction (DSAT) cases, producing prompt-response pairs $(x,y^{-})$ that reflect concrete failure modes encountered in real-world scenarios. At each iteration, the current model $\pi_{\theta_k}$ generates a fresh positive response $y^{+}$ for the same prompt $x$, allowing the positive response to evolve alongside the model’s capacities. 
The model is then updated by minimizing the DPO loss:
\begin{equation}
\label{eq:drift_loss}
\begin{aligned}
\mathcal{L}_{\textsc{dpo}}
&= -\,\mathbb{E}_{(x,y^{+},y^{-})}
   \Bigl[
       \log\sigma \Bigl(
           \beta\log\tfrac{\pi_{\theta}(y^{+}\mid x)}{\pi_{\mathrm{ref}}(y^{+}\mid x)} 
         - \beta\log\tfrac{\pi_{\theta}(y^{-}\mid x)}{\pi_{\mathrm{ref}}(y^{-}\mid x)}
       \Bigr)
   \Bigr] \\
\end{aligned}
\end{equation}
where $\beta$ controls the preference margin and $\sigma$ denotes the logistic function.

\begin{algorithm}[htbp]
  \caption{DRIFT: Dissatisfaction-Refined Iterative Preference Training}
  \label{alg:drift}
  \begin{algorithmic}[1]
  \State \textbf{Input:} Wild implicit feedback dataset, 
  current model $\pi_\theta$, reference model $\pi_{\mathrm{ref}}$, 
  number of iterations $K$
  \State \textbf{Output:} Updated model parameters $\theta_K$
  \State \textbf{Filter:} Extract DSAT signals to form 
  $\mathcal{D} = \{(x, y^{-}) \mid y^{-} \in \text{DSAT}(x)\}$
  \For{$k = 1, \dots, K$}
      \State \textbf{Positive Sampling:} For each $(x, y^{-}) \in \mathcal{D}$, 
      sample a fresh positive response $y^{+} \sim \pi_{\theta_k}(\cdot \mid x)$
      \State \textbf{Loss Update:} Update $\theta_k$ by minimizing $\mathcal{L}_{\textsc{dpo}}$ (Eq.~\ref{eq:drift_loss})
  \EndFor
  \end{algorithmic}
\end{algorithm}

\section{Experiment}

In this section, we evaluate DRIFT against strong self-improvement baselines, focusing on real world task performance. Sec.~\ref{sec:setup} outlines datasets, training recipe, and evaluation benchmarks. Sec.~\ref{sec:perf} presents the task performance on WildBench and AlpacaEval2. Then, in Sec.~\ref{sec:exploration}, we analyze exploration cability on response space of each method through global high-reward coverage.

\subsection{Setup}
\label{sec:setup}

\textbf{Datasets.}
\textbf{\textit{WildFeedback} (real-world, user-feedback).}
The \textit{WildFeedback} dataset is derived from WildChat-1M, a corpus of over one million human–ChatGPT conversations, by assigning per-turn labels {Satisfaction (SAT), Dissatisfaction (DSAT), Neutral (Non-DSAT/Non-SAT)}. Labels are derived using SPUR ~\citep{lin2024interpretable}, which recursively prompts GPT-4 to learn SAT/DSAT rubrics from thumb-annotated conversations and applies them to score satisfaction/ dissatisfaction.

\begin{wraptable}{r}{0.42\textwidth}
\vspace{-20pt}
\caption{Data Statistics}
\centering
\renewcommand{\arraystretch}{1.1}
\begin{tabular}{l r}
\toprule
\textbf{Category} & \textbf{\# Conversations} \\
\midrule
DSAT Conversations & 10,467 \\
SAT Conversations  & 4,378 \\
DSAT $\rightarrow$ SAT & 491 \\
\bottomrule
\label{tab:wf_stats}
\end{tabular}
\vspace{-20pt}
\end{wraptable}

As summarized in Table~\ref{tab:wf_stats}, among all 88,920 unique conversations, only 4,478 (5.04\%) conversations were labeled SAT, while 10,632 (11.96\%) were labeled DSAT, which is more than twice the SAT count. We also curate 491 seed data items (0.55\%) in which LLM responses transition from DSAT to SAT after revision, naturally yielding preference pairs.


\textbf{\textit{UltraFeedback} (synthetic, LLM-labeled).}
For completeness and comparability with prior self-improvement work, we also evaluate on \textit{UltraFeedback} in which each prompt has four completions from different models that are scored by GPT-4.
This synthetic setting provides a complementary evaluation to the real-world data setting and ensures fair comparison with SPIN/ IterDPO on commonly used LLM-labeled preference data.

\begin{table}[thp]
\vspace{-12pt}
\centering
\caption{Comparison of preference data construction strategies across different methods. ``Self-Gen" means responses are generated by the current policy. ``Real" is from user feedback.}
\label{tab:method_comparison}
\begin{tabular}{l|c|c|c|c|c|c}
\toprule
\multirow{2}{*}{Method} 
  & \multicolumn{2}{c|}{Chosen Response} 
  & \multicolumn{2}{c|}{Rejected Response} 
  & \multirow{2}{*}{\begin{tabular}[c]{@{}c@{}}No Positive\\ Examples\end{tabular}} 
  & \multirow{2}{*}{\begin{tabular}[c]{@{}c@{}}Leverage Real\\ User Feedback\end{tabular}} \\
\cmidrule(lr){2-3}\cmidrule(lr){4-5}
& Self-Gen & Real & Self-Gen & Real & & \\
\midrule
SPIN        & \xmark & \cmark\ (SAT) & \cmark & \xmark & \xmark & \cmark \\
IterDPO     & \cmark & \xmark        & \cmark & \xmark & \cmark & \xmark \\
DRIFT (Ours)& \cmark & \xmark        & \xmark & \cmark\ (DSAT) & \cmark & \cmark \\
\bottomrule
\end{tabular}
\end{table}

\textbf{Training Recipe.} Our experiments are conducted on Qwen2.5-7B-Instruct and Qwen2.5-14B-Instruct. We adopt a two-stage training:  \\
(1) \emph{Warm start:} train on the 491 seed DSAT$\to$SAT pairs, which provides an initial aligned policy.  \\
(2) \emph{Iterative preference training:} After warm start, each method constructs fresh preference pairs. \\ \emph{Per-iteration preference data construction (Table ~\ref{tab:method_comparison}):}\\
\underline{DRIFT}: In \emph{WildFeedback}, we keep the DSAT reply as the \emph{rejected} response and, at each iteration, sample a fresh response from the current policy using prompt which contains the full conversation including the DSAT user turn and an explicit improvement instruction. In \emph{UltraFeedback}, we replace the original \emph{chosen} with a fresh policy sample.\\
\underline{SPIN}: In \emph{WildFeedback}, we use the SAT reply as the \emph{chosen} response and a policy sample as the \emph{rejected} response using the prompt which is the conversation before the SAT user turn. In \emph{UltraFeedback}, we replace the original \emph{rejected} with a fresh policy sample.\\
\underline{IterDPO}: In \emph{WildFeedback}, we generate two responses using different prompts: the \emph{chosen} context includes the full conversation including the DSAT user turn and an explicit improvement instruction, while the \emph{rejected} context is the conversation before the DSAT user turn which does not reveal the user preference and instruction information. In \emph{UltraFeedback}, both responses are generated from the same prompt and ranked by the reward model \footnote{\href{https://huggingface.co/OpenAssistant/reward-model-deberta-v3-large-v2}{OpenAssistant/reward-model-deberta-v3-large-v2}}; the higher-scored response is \emph{chosen} and the other is \emph{rejected}. \\
We then perform one epoch of DPO training after data generation, which prevents overfitting during iterative training.
Full training details are presented in Appendix~\ref{app:implement}.

\textbf{Evaluation.} We evaluate on WildBench (Elo, Task Score) and AlpacaEval2 (win rate, length-controlled; LC). WildBench is built from challenging real-world user queries in WildChat-1M and spans five diverse categories: \textit{Creative}, \textit{Reasoning}, \textit{Math}, \textit{Info Seek}, and \textit{Coding}, making it well suited for assessing our method for real-world performance. The WildBench Task Score is computed as a weighted average across these five tasks.

\subsection{Performance evaluation}
\label{sec:perf}


\subsubsection{Results on WildFeedback}
\label{sec:perf-wildfeedback}

We first examine performance on the real-world \textit{WildFeedback} dataset, which contains authentic user satisfaction/ dissatisfaction labels and exhibits a strong imbalance: dissatisfied responses (DSAT) outnumber satisfied ones (SAT) by more than 2:1. Hence, we consider two configurations: a \textbf{Controlled setting} with around 4k samples (matching SPIN for fair comparison), and a \textbf{Full setting} using all 11k DSAT samples to demonstrate DRIFT’s ability to exploit abundant negative feedback.

\begin{table*}[ht]
\centering
\caption{Results of training on \textit{WildFeedback}. 
Appendix~\ref{app:detailed_results} for detailed per-task results.}
\label{tab:wf_overview}
\renewcommand{\arraystretch}{1.1}

\begin{tabular}{l|cc|cc|cc|cc}
\toprule
 & \multicolumn{4}{c|}{\textbf{WildBench}} & \multicolumn{4}{c}{\textbf{AlpacaEval2}}\\
\cmidrule(lr){2-5}\cmidrule(lr){6-9}
\textbf{Method} & \multicolumn{2}{c|}{7B} & \multicolumn{2}{c|}{14B} &
                  \multicolumn{2}{c|}{7B} & \multicolumn{2}{c}{14B} \\
 & Elo & Score & Elo & Score & Win & LC & Win & LC \\
\midrule

Base & 1194.67 & 48.66 & 1213.17 & 55.08 & 37.69 & 39.73 & 36.65 & 43.58 \\
Seed & 1193.66 & 49.11 & 1213.50 & 54.93 & 42.67 & 42.06 & 40.25 & 45.78 \\
\midrule

\multicolumn{9}{c}{\textbf{Controlled Setting}}\\
\midrule

\rowcolor{gray!20}
\multicolumn{9}{l}{\textit{SPIN}} \\
~~iter1 & 1180.75 & 42.86 & 1200.63 & 47.16 & 26.21 & 34.09 & 25.53 & 37.28 \\
~~iter2 & 1173.45 & 37.86 & 1192.56 & 44.04 & 20.56 & 29.00 & 18.57 & 30.91 \\

\rowcolor{gray!20}
\multicolumn{9}{l}{\textit{IterDPO}} \\
~~iter1 & 1189.43 & 47.07 & 1206.65 & 51.79 & 41.55 & 41.35 & 37.14 & 43.14 \\
~~iter2 & 1192.46 & 48.94 & 1211.69 & 56.63 & 41.18 & 40.14 & 48.32 & \textbf{47.28} \\

\rowcolor{gray!20}
\multicolumn{9}{l}{\textit{DRIFT (Ours)}} \\
~~iter1 & \textbf{1197.13} & \textbf{51.06} & \textbf{1215.73} & \textbf{58.37} & 42.73 & 41.41 & \textbf{48.76} & 45.42 \\
~~iter2 & 1195.33 & \textbf{51.06} & 1214.03 & 57.59 & \textbf{43.79} & \textbf{41.49} & 46.83 & 43.48 \\
\midrule

\multicolumn{9}{c}{\textbf{Full Setting}}\\
\midrule

\rowcolor{gray!20}
\multicolumn{9}{l}{\textit{IterDPO}} \\
~~iter1 & 1185.11 & 46.31 & 1205.48 & 52.34 & 40.36 & 39.85 & 38.07 & 43.99 \\
~~iter2 & 1182.33 & 46.17 & 1206.63 & 51.38 & 35.76 & 37.06 & 32.88 & 37.92 \\

\rowcolor{gray!20}
\multicolumn{9}{l}{\textit{DRIFT (Ours)}} \\
~~iter1 & 1194.81 & 50.61 & 1212.83 & 57.27 & 43.90 & 40.32 & \textbf{48.63} & \textbf{47.46} \\
~~iter2 & \textbf{1199.09} & \textbf{51.69} & \textbf{1217.61} & \textbf{58.30} & \textbf{46.64} & \textbf{42.72} & 45.33 & 44.93 \\
\bottomrule
\end{tabular}
\end{table*}

As shown in Table~\ref{tab:wf_overview}, DRIFT raises the WildBench Task Score by 6.23\% (+3.03) for 7B and 5.97\% (+3.29) for 14B, and boosts AlpacaEval2 win rate by 8.95\% for 7B and 12.11\% for 14B compared to the base models. And our method consistently outperforms both SPIN and IterDPO across all metrics in both controlled and full data settings.

While SPIN shows degraded performance with iterations likely due to its reliance on a fixed set of satisfied responses becoming stale, DRIFT maintains steady improvements, suggesting that its strategy prevents distribution shift. IterDPO performs better than SPIN but still lags behind DRIFT in both settings, indicating that while reward model guidance helps, the real world informative DSAT examples provides superior training signal. Notably, DRIFT's controlled setting (using only 4k samples) already matches or exceeds IterDPO's full setting performance, demonstrating the efficiency of dissatisfaction-anchored learning. The stronger gains at the 14B scale suggest that DRIFT benefits larger models more, likely because their greater capacity makes it easier to discover better positives while being anchored by real negatives. This making DRIFT well suited for scaling up.

\paragraph{Long-horizon stability beyond 2 iterations.}
To further investigate the stability and performance trends for longer iterations, we extended all methods to five iterations on Qwen2.5-7B. Figure~\ref{fig:iter} visualizes the performance trajectories across iterations.

\begin{wrapfigure}{r}{0.5\textwidth}
    \centering
    \includegraphics[width=0.5\textwidth]{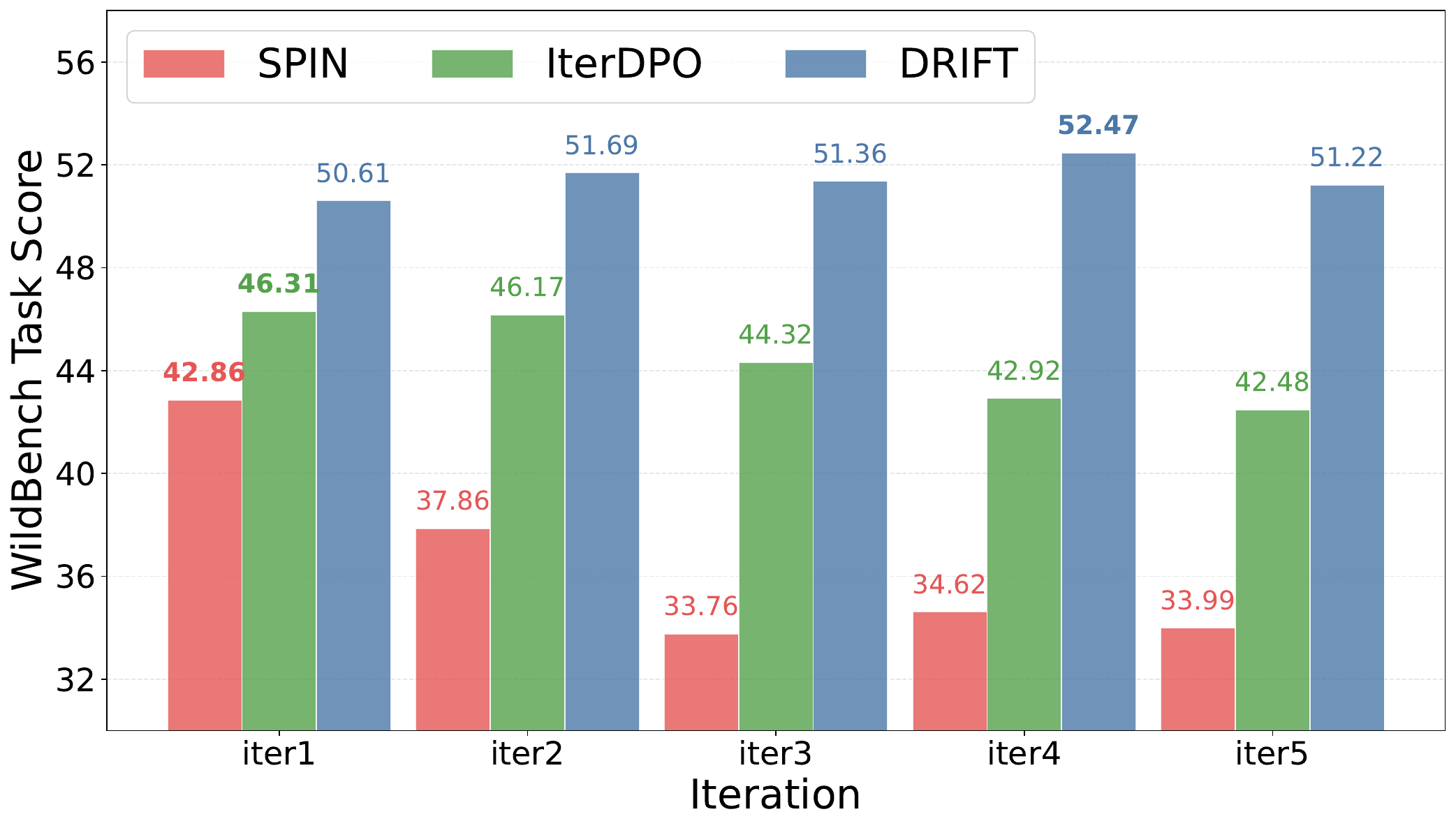}
    \vspace{-20pt}
    \caption{Performance across five iterations on Qwen2.5-7B. Bold values indicate the maximum score achieved by each method across iterations.}
    \label{fig:iter}
    \vspace{-8pt}
\end{wrapfigure}

Both SPIN and IterDPO peak early at iter1 and then exhibit performance degradation, with SPIN showing the most pronounced decline. In contrast, DRIFT demonstrates sustained improvement up to iter4 (52.47), and then forms a stable plateau with minimal variation (51.22 at iter5). This stability suggests that DRIFT's strategy of anchoring on real dissatisfied responses while continuously sampling fresh positives prevents mode collapse that plagues other self-play or self-improve methods during extended iterative training. The limited performance collapse observed even at iter5 further validates DRIFT's robustness for long-horizon self-improvement.

\paragraph{Unguided ablation study and the source of DRIFT's advantage.}
An important consideration is whether DRIFT’s observed gains are attributable to the guidance prompts provided during positive generation. To examine this, we evaluated an unguided variant in which DRIFT and IterDPO generate responses solely from the original user prompt, matching SPIN's configuration. The results (see Appendix~\ref{app:detailed_results}, Table~\ref{tab:unguided_ablation}) show that DRIFT remains consistently stronger than both baselines in this setting, and the gap between the unguided and full versions is small. This suggests that DRIFT's gains are not attributable to instruction prompt and reference point, but instead to its design principle of anchoring on off-policy DSAT negatives while drawing positives from the evolving policy.

\subsubsection{Results on UltraFeedback}
\label{sec:perf-ultrafeedback}

To ensure comprehensive evaluation and fair comparison with prior work, we also evaluate on the synthetic \textit{UltraFeedback} dataset, where preferences are scored by GPT-4 rather than derived from real user interactions. This complementary evaluation helps assess whether DRIFT's advantages generalize beyond the specific characteristics of real-world dissatisfaction signals to more conventional preference learning settings. As shown in Table~\ref{tab:uf_overview}, DRIFT outperforms the base model with gains of 4.62\% (+2.25) for 7B and  7.61\% (+4.19) for 14B on WildBench Task Score, and improvements of +3.35\% (7B) and +12.29\% (14B) on AlpacaEval2 win rate. Compared to the best SPIN/IterDPO results, DRIFT achieves additional gains of +2.14 (7B) and +4.49 (14B) on Task Score, as well as +6.51\% (7B) and +6.10\% (14B) on win rate.

\begin{table*}[ht]
\centering
\caption{Results of training on \textit{UltraFeedback}. Appendix~\ref{app:detailed_results} for detailed per-task results.}
\label{tab:uf_overview}
\renewcommand{\arraystretch}{1.1}
\begin{tabular}{l|cc|cc|cc|cc}
\toprule
 & \multicolumn{4}{c|}{\textbf{WildBench}} & \multicolumn{4}{c}{\textbf{AlpacaEval2}}\\
\cmidrule(lr){2-5}\cmidrule(lr){6-9}
\textbf{Method} & \multicolumn{2}{c|}{7B} & \multicolumn{2}{c|}{14B} & \multicolumn{2}{c|}{7B} & \multicolumn{2}{c}{14B}\\
 & Elo & Score & Elo & Score & Win & LC & Win & LC \\
\midrule
Base & 1194.67 & 48.66 & 1213.17 & 55.08 & 37.69 & 39.73 & 36.65 & 43.58 \\
\midrule
\rowcolor{gray!20}
\multicolumn{9}{l}{\textit{SPIN}} \\
~~iter1 & 1163.16 & 35.10 & 1178.88 & 36.99 & 18.39 & 26.62 & 16.09 & 28.20 \\
~~iter2 & 1139.66 & 25.93 & 1155.07 & 28.01 & 13.23 & 19.91 & 13.66 & 24.29 \\
\midrule
\rowcolor{gray!20}
\multicolumn{9}{l}{\textit{IterDPO}} \\
~~iter1 & 1194.45 & 48.77 & 1214.14 & 54.21 & 34.53 & \textbf{40.55} & 33.29 & 42.84 \\
~~iter2 & 1192.01 & 48.49 & 1215.12 & 54.78 & 32.15 & 39.92 & 28.51 & 40.47 \\
\midrule
\rowcolor{gray!20}
\multicolumn{9}{l}{\textit{DRIFT (Ours)}} \\
~~iter1 & 1197.04 & \textbf{50.91} & 1215.67 & 58.52 & \textbf{41.04} & 40.37 & 47.89 & \textbf{48.46} \\
~~iter2 & \textbf{1197.94} & 50.32 & \textbf{1218.75} & \textbf{59.27} & 40.47 & 37.09 & \textbf{48.94} & 47.43 \\
\bottomrule
\end{tabular}
\end{table*}


\subsection{Exploratory Capacity Analysis: DRIFT Explores More Diverse High Reward Solutions}
\label{sec:exploration}

\begin{wrapfigure}{r}{0.38\textwidth}
    \vspace{-12pt}
    \centering
    \includegraphics[width=0.38\textwidth]{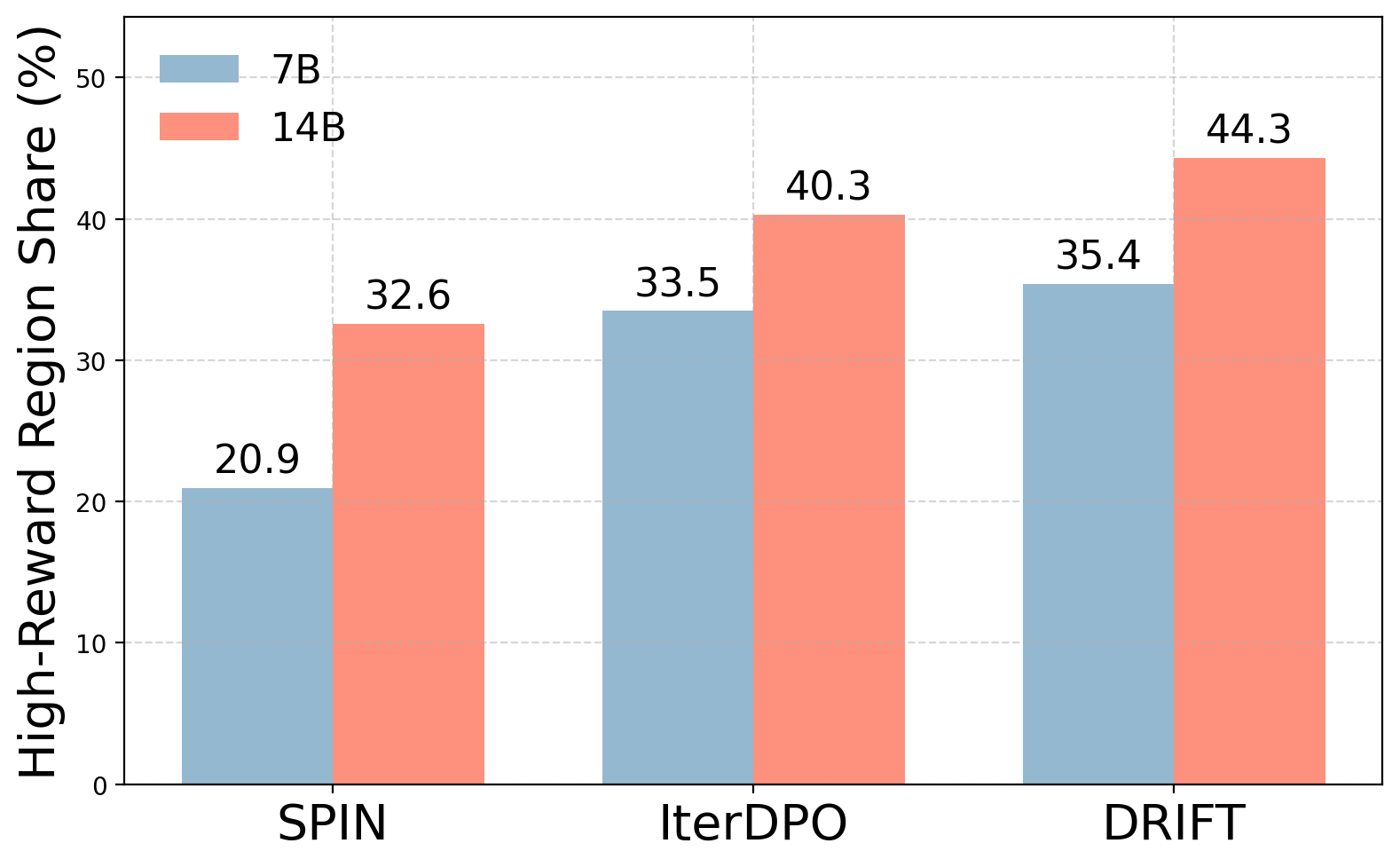}
    \vspace{-20pt}
    \caption{Comparison of high reward region coverage.}
    \label{fig:explore_bar}
    \vspace{-10pt}
\end{wrapfigure}

A central question in preference learning is whether pushing rewards upward narrows the response distribution and erodes \emph{exploration capacity} and \emph{diversity}. Methods that optimize aggressively for top scores or fixed chosen responses can shift toward mode seeking: peak metrics improve while alternative high-quality modes are under explored. We further investigate whether DRIFT's strategy of sampling fresh positives while anchoring on real dissatisfied responses enhances the model's ability to explore the space of high-quality solutions than SPIN or iterDPO, which may progressively constrain the solution space.

\paragraph{Semantic reward topography construction.}
For each prompt, we first sample 128 responses from each method and embed all collected responses\footnote{Embeddings are computed with \url{https://huggingface.co/Qwen/Qwen3-Embedding-0.6B}.}. We obtain a 2D semantic projection via UMAP and estimate a reward-weighted density surface \(Z_{\text{all}}(g)\in[0,1]\) over a regular grid \(g\) using Gaussian KDE. The global high-reward region is defined as
$$
\mathcal{H}=\{\,g: Z_{\text{all}}(g)\ge z_{\text{high}}\,\},\qquad 
z_{\text{high}}=\operatorname{Quantile}\!\big(Z_{\text{all}},0.8\big).
$$
For each method \(m\), we construct a corresponding surface \(D_m(g)\) on the same grid and bandwidth, and measure its coverage inside the global high-reward region by
$$\mathcal{S}_m=\{\,g\in\mathcal{H}: D_m(g)\ge z_{\text{high}}\,\},\qquad
\mathrm{Share}(m)=\frac{|\mathcal{S}_m|}{|\mathcal{H}|}.$$
We render \(Z_{\text{all}}\) as the background terrain, overlay the boundary of \(\mathcal{H}\) (dashed), and shade \(\mathcal{S}_m\) for each method. We report the average high-reward coverage across 50 prompts in Figure~\ref{fig:explore_bar}. DRIFT consistently attains the largest share at both 7B and 14B scales, with a larger margin at 14B, indicating greater high reward solutions diversity and better scalability.




\paragraph{Case Study: High-Reward Coverage and Response Diversity.}

Figure~\ref{fig:explore} shows an example that DRIFT distributes responses across a broader set of high-reward semantic regions, whereas SPIN and IterDPO concentrate their outputs within a much smaller subset of the space. Notably, DRIFT also discovered a distinct region (circled in the reward topography) where it uniquely employed markdown formatting to structure research papers, demonstrating alternative presentation styles for the same prompt.

\begin{figure}[htbp]
    \centering
    \includegraphics[width=\linewidth]{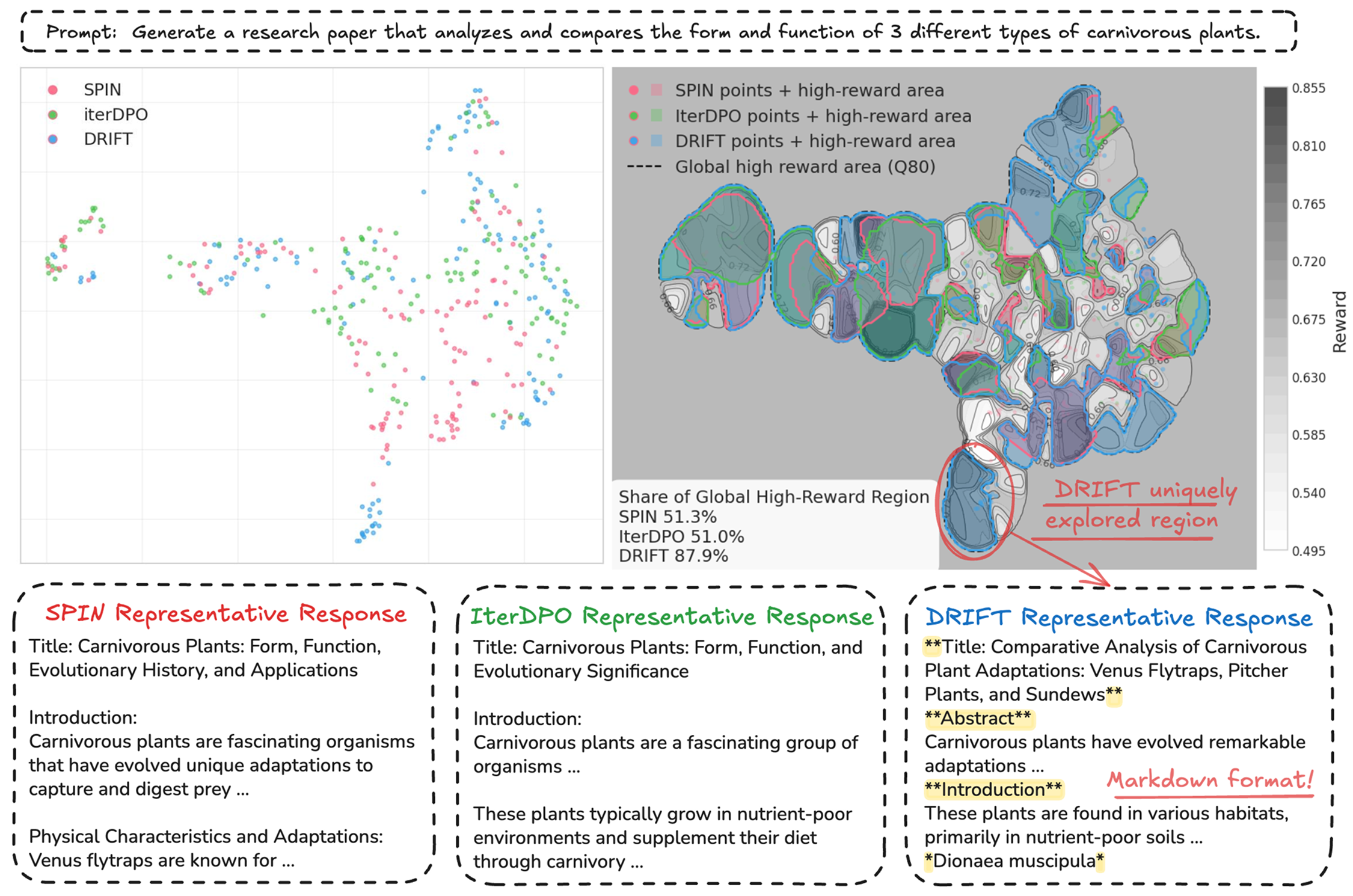}
    \vspace{-0.65cm}
    \caption{Example of response diversity and quality comparison via semantic reward topography. Two central plots: Left is the UMAP scatter of all responses; Right is the semantic reward topography showing the global high-reward region and the coverage of the three methods. Full prompt and responses are in Appendix~\ref{app:example}}
    \label{fig:explore}
    \vspace{-0.1cm}
\end{figure}

Anchoring on authentic DSAT negatives while sampling fresh positives enables DRIFT to maintain and amplify exploratory capacity. DRIFT not only reaches high reward but also occupies a far broader set of high-reward areas, explaining why it continues to improve over iterations without collapsing to a small family of solutions.

\section{Theoretical Analysis}
\label{sec:analysis}

DRIFT shows superior performance by leveraging real-world user dissatisfaction as high-quality negatives, which leads us to further investigate and analyze how real-world data shapes the success of DRIFT and why some other strong baselines like SPIN and IterDPO fall short. In this section, we prove that DRIFT maintains a usable gradient signal through fresh positives anchored by genuine DSAT negatives; in contrast, updates fitted to a fixed SAT set like SPIN can easily overfit to a small sub-optimal subset with gradient collapse. 




\paragraph{Notation.}
Let $\mathcal{X}$ be the prompt set. For a particular prompt $x\sim\mathcal{X}$, denote $y^+$ as the chosen response and $y^-$ the rejected response. Let the generation likelihoods of $y^+$ and $y^-$ to be $\pi^+=\pi_\theta(y^+\mid x)$ and $\pi^-=\pi_\theta(y^-\mid x)$ respectively, where $\pi_\theta$ is the language model we aim to train.

For formula simplicity, we denote the implicit reward margin~\citep{rafailov2023direct} as:
\begin{equation}
    s = \beta\cdot\left(\log\left(\pi^+/\pi_{\text{ref}}(y^+|x)\right)-\log\left(\pi^-/\pi_{\text{ref}}(y^-|x)\right)\right),
    \label{eq:margin}
\end{equation}
and the loss function:
\begin{equation}
\ell=-\log \sigma(s),
\qquad
\nabla_\theta\ell
=
-\,\beta\,\sigma(-s)\Bigl[
\nabla_\theta\log\pi_\theta(y^{+}\mid x)-\nabla_\theta\log\pi_\theta(y^{-}\mid x)
\Bigr],
\label{eq:dpo-grad}
\end{equation}
where $\sigma(s)=(1+e^{-s})^{-1}$ is the logistic function. We also denote $d_\theta:=\nabla\ln\pi_\theta(y^{+}\mid x)-\nabla\ln\pi_\theta(y^{-}\mid x)$ and $g(\theta):=\mathbb{E}[-\nabla\ell(\theta)]=\beta\,\mathbb{E}[\sigma(-s)\,d_\theta]$. Finally, let $r^\star:\mathcal X\times\mathcal Y\to[0,1]$ be the unknown gold reward, and $J(\theta):=\mathbb{E}_{x\sim\mu}\,\mathbb{E}_{Y\sim\pi_\theta(\cdot\mid x)}[r^\star(x,Y)]$ be the overall objective.

\paragraph{Reward Margin Hypothesis.}
With a probability of at least $p_{\mathrm{imp}}$, both the implicit reward margin and the gold reward margin have positive lower bounds. Specifically, there exists some $\tau\in(0,\tfrac12]$ and $m_r>0$ such that
\begin{equation}
E_{\mathrm{imp}}\;:=\;\Bigl\{\ \sigma(-s)\ge \tau\ \text{ and }\ r^\star(x,y^+)-r^\star(x,y^-)\ge m_r\ \Bigr\},
\qquad
\mathbb{P}(E_{\mathrm{imp}})\ \ge\ p_{\mathrm{imp}}>0.
\label{eq:imp-event}
\end{equation}
This hypothesis ensures that chosen responses are mostly ranked higher than rejected responses.

\paragraph{Non-vanishing expected training signal.}
We first certify that DRIFT maintains a uniform \emph{expected} gradient magnitude under a positive-mass “quality” event.

\begin{lemma}[Expected gradient lower bound under local quality]
\label{lem:exp-grad}
Let $E=\{\sigma(-s)\ge\tau\}$ with $\mathbb{P}(E)\ge p_0>0$ for some $\tau\in(0,\tfrac12]$. If $\mathbb{E}\bigl[\|d_\theta\|\mid E\bigr]\ge \Delta_{\mathrm{cond}}>0$, then
\begin{equation}
\mathbb{E}\bigl[\ \|\nabla_\theta\ell\|\ \bigr]\ \ge\ \beta\,\tau\,p_0\,\Delta_{\mathrm{cond}}.
\end{equation}
\emph{Proof.} From Eq.~\ref{eq:dpo-grad} and $\sigma(-s)\ge 0$,
\[
\mathbb{E}\|\nabla\ell\|
=\mathbb{E}\bigl[\beta\,\sigma(-s)\,\|d_\theta\|\bigr]
\ge \beta\,\tau\,\mathbb{E}\bigl[\|d_\theta\|\mathbf{1}_E\bigr]
= \beta\,\tau\,\mathbb{P}(E)\,\mathbb{E}\bigl[\|d_\theta\|\mid E\bigr].
\qedhere
\]
\end{lemma}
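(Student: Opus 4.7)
The plan is to directly manipulate the closed-form gradient expression from Eq.~\ref{eq:dpo-grad} and then invoke two elementary conditional-expectation identities; no heavy machinery is needed.

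First, I would take the norm of Eq.~\ref{eq:dpo-grad}. The scalar in front of $d_\theta$ is $-\beta\,\sigma(-s)$, but because $\beta>0$ and $\sigma(-s)\in[0,1]$ is nonnegative, the sign is absorbed by the norm and one obtains the pointwise identity
\[
\|\nabla_\theta\ell\|\;=\;\beta\,\sigma(-s)\,\|d_\theta\|.
\]
The only care needed here is that $\sigma$ is evaluated at $-s$ rather than $s$; since $\sigma(\cdot)\ge 0$ in either case, no absolute value is lost in passing to the norm.

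Second, I would pass to expectations and lower-bound by restricting to $E$ via the indicator $\mathbf{1}_E$. Because $\sigma(-s)\,\|d_\theta\|\ge 0$ pointwise, this restriction can only decrease the value, giving $\mathbb{E}\|\nabla_\theta\ell\|\ge\beta\,\mathbb{E}[\sigma(-s)\,\|d_\theta\|\,\mathbf{1}_E]$. On $E$ the definition of the event gives $\sigma(-s)\ge\tau$, so $\tau$ pulls outside the expectation, yielding $\mathbb{E}\|\nabla_\theta\ell\|\ge\beta\tau\,\mathbb{E}[\|d_\theta\|\,\mathbf{1}_E]$.

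Finally, I would rewrite $\mathbb{E}[\|d_\theta\|\,\mathbf{1}_E]=\mathbb{P}(E)\,\mathbb{E}[\|d_\theta\|\mid E]$ and substitute the two hypotheses $\mathbb{P}(E)\ge p_0$ and $\mathbb{E}[\|d_\theta\|\mid E]\ge\Delta_{\mathrm{cond}}$ to conclude $\mathbb{E}\|\nabla_\theta\ell\|\ge\beta\tau p_0\Delta_{\mathrm{cond}}$. There is no genuine obstacle: the statement is a Markov-type one-sided restriction and the argument is essentially bookkeeping. The conceptual content worth highlighting in the write-up is not the algebra but the role of the hypothesis $\mathbb{E}[\|d_\theta\|\mid E]\ge\Delta_{\mathrm{cond}}$, which is precisely what DRIFT's pairing of fresh on-policy positives with genuine DSAT negatives is designed to secure, and what the later discussion contrasts with the gradient collapse that occurs when chosen and rejected responses drift toward one another.
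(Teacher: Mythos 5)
Your proposal is correct and follows essentially the same route as the paper's proof: take the norm of Eq.~\ref{eq:dpo-grad} to get the pointwise identity $\|\nabla_\theta\ell\|=\beta\,\sigma(-s)\,\|d_\theta\|$, restrict to $E$ where $\sigma(-s)\ge\tau$, and rewrite $\mathbb{E}[\|d_\theta\|\mathbf{1}_E]=\mathbb{P}(E)\,\mathbb{E}[\|d_\theta\|\mid E]$ before substituting the hypotheses. No gaps; your write-up simply spells out the final substitution that the paper leaves implicit.
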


This bound shows that as long as a non-negligible fraction of pairs satisfy $\sigma(-s)\ge\tau$ and have a nonzero conditional gradient gap $\bigl(\mathbb{E}[\|d_\theta\|\,\mid\,\sigma(-s)\ge\tau]>0\bigr)$, the expected training signal stays away from zero.

\paragraph{Expected improvement of actual utility.}
We now state a general improvement guarantee: the expected DPO step increases the true utility $J$, with the gain quantified based on the key data condition.

\begin{theorem}[Expected improvement of $J$]
\label{thm:one-step}
Assume the improvement event Eq. \ref{eq:imp-event} holds with probability at least $p_{\mathrm{imp}}$, and there exists $\lambda>0$ such that
\begin{equation}
\mathbb{E}\!\left[\ \big\langle \nabla J(\theta),\ d_\theta\big\rangle\ \Bigm|\ E_{\mathrm{imp}}\right]\ \ge\ \lambda.
\label{eq:local-align}
\end{equation}
If $J$ is $L_J$-smooth in a neighborhood of $\theta$, then for any $\eta>0$,
\begin{equation}
\mathbb{E}\bigl[J(\theta+\eta\,g(\theta))\bigr]
\ \ge\
J(\theta)
\ +\ \eta\,\beta\,\tau\,p_{\mathrm{imp}}\,\lambda
\ -\ \frac{L_J}{2}\,\eta^2\,\mathbb{E}\bigl[\|g(\theta)\|^2\bigr].
\label{eq:one-step}
\end{equation}
In particular, for sufficiently small $\eta$, the right-hand side exceeds $J(\theta)$ by a linear-in-$\eta$ margin $\beta\tau p_{\mathrm{imp}}\lambda$ up to $O(\eta^2)$.
\end{theorem}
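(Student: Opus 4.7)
The plan is to combine the $L_J$-smooth descent inequality with a lower bound on the inner product $\langle \nabla J(\theta),\, g(\theta)\rangle$ harvested from the event $E_{\mathrm{imp}}$. The quadratic-in-$\eta$ term of \eqref{eq:one-step} falls out for free from smoothness; the real work is therefore to certify that $\langle \nabla J(\theta),\, g(\theta)\rangle \ge \beta\,\tau\,p_{\mathrm{imp}}\,\lambda$.

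First I would invoke smoothness of $J$: for any direction $v$ and step $\eta>0$,
\begin{equation*}
J(\theta+\eta v)\ \ge\ J(\theta)+\eta\,\langle \nabla J(\theta),\,v\rangle\ -\ \tfrac{L_J}{2}\,\eta^{2}\,\|v\|^{2}.
\end{equation*}
Instantiating with $v=g(\theta)$ (which is deterministic once $\theta$ is fixed, so any outer expectation passes through trivially) gives the skeleton of \eqref{eq:one-step}, reducing the task to estimating the linear term.

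Second, I would expand $g(\theta)=\beta\,\mathbb{E}[\sigma(-s)\,d_\theta]$ and decompose along $E_{\mathrm{imp}}$ and its complement. On $E_{\mathrm{imp}}$ the hypothesis gives $\sigma(-s)\ge \tau$, so
\begin{equation*}
\langle \nabla J,\,g(\theta)\rangle
\ \ge\ \beta\,\mathbb{E}\!\bigl[\sigma(-s)\,\langle \nabla J,\,d_\theta\rangle\,\mathbf{1}_{E_{\mathrm{imp}}}\bigr]
\ \ge\ \beta\,\tau\,p_{\mathrm{imp}}\,\mathbb{E}\!\bigl[\langle \nabla J,\,d_\theta\rangle\,\bigm|\,E_{\mathrm{imp}}\bigr]
\ \ge\ \beta\,\tau\,p_{\mathrm{imp}}\,\lambda,
\end{equation*}
where the last inequality is the alignment hypothesis \eqref{eq:local-align}. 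Substituting this bound into the smoothness inequality delivers \eqref{eq:one-step}; the ``in particular'' statement then follows because for sufficiently small $\eta$ the linear term $\eta\,\beta\tau p_{\mathrm{imp}}\lambda$ dominates $\tfrac{L_J}{2}\,\eta^{2}\,\mathbb{E}\|g(\theta)\|^{2}$.

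The main obstacle is making the ``drop the complement'' step honest. On $E_{\mathrm{imp}}^c$ the factor $\sigma(-s)\,\langle \nabla J,\,d_\theta\rangle$ has indeterminate sign, and pulling the lower bound $\sigma(-s)\ge\tau$ through the inner product on $E_{\mathrm{imp}}$ silently requires $\langle \nabla J,\,d_\theta\rangle\ge 0$ pointwise on that event. I would close this gap in one of two equivalent ways: either tighten \eqref{eq:local-align} to a pointwise nonnegativity on $E_{\mathrm{imp}}$ (a natural strengthening, since the Reward Margin Hypothesis already guarantees that both implicit and gold margins are positive there, locally aligning the DPO log-ratio gradient with $\nabla J$), or reinterpret $\lambda$ as the net effective alignment that already nets the complement's contribution. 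Either reading yields the stated bound without altering the constants.
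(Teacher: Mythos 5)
Your proposal is essentially the paper's own proof: the appendix argument likewise instantiates the $L_J$-smoothness inequality at $v=g(\theta)$, expands $g(\theta)=\beta\,\mathbb{E}[\sigma(-s)\,d_\theta]$, restricts to $E_{\mathrm{imp}}$ where $\sigma(-s)\ge\tau$, and invokes Eq.~\ref{eq:local-align} to conclude $\langle\nabla J(\theta),g(\theta)\rangle\ \ge\ \beta\,\tau\,p_{\mathrm{imp}}\,\lambda$, then bounds $\mathbb{E}\|g\|^2$ (by $\beta^2 C_d$ under the moment assumption) exactly as your skeleton would. The subtlety you flag --- that discarding $E_{\mathrm{imp}}^{c}$ and pulling $\tau$ through the inner product tacitly requires $\langle\nabla J(\theta),d_\theta\rangle\ge 0$ pointwise on $E_{\mathrm{imp}}$ and a nonnegative contribution from the complement, neither of which follows from the conditional-expectation hypothesis alone --- is present in the paper's proof as well, which performs both steps without comment; your proposed remedies (strengthening Eq.~\ref{eq:local-align} to pointwise nonnegativity on $E_{\mathrm{imp}}$, or reading $\lambda$ as the net alignment after the complement's contribution) are the natural ways to make the shared argument fully rigorous without changing the constants.
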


\noindent\emph{Proof sketch.}
$g(\theta)=\beta\,\mathbb{E}[\sigma(-s)d_\theta]$ gives
\[
\mathbb{E}\langle\nabla J,g\rangle
=\beta\,\mathbb{E}\bigl[\sigma(-s)\langle\nabla J,d_\theta\rangle\bigr]
\ge \beta\,\tau\,\mathbb{P}(E_{\mathrm{imp}})\,
\mathbb{E}\bigl[\langle\nabla J,d_\theta\rangle\mid E_{\mathrm{imp}}\bigr]
\ge \beta\,\tau\,p_{\mathrm{imp}}\,\lambda.
\]
$L_J$-smoothness yields $J(\theta+\eta g)\ge J(\theta)+\eta\langle\nabla J,g\rangle-\frac{L_J}{2}\eta^2\|g\|^2$, then take expectations to obtain Eq.\ref{eq:one-step}. \emph{Full proof in Appendix~\ref{thm:app-one-step}.}
\hfill$\square$

\paragraph{Why DRIFT outperforms SPIN?}
When SPIN concentrates probability on a finite SAT catalogue and reaches a fixed point, the magnitude of the pairwise DPO signal on SPIN pairs is controlled by the catalogue’s log–density-ratio variance:
\begin{equation}
\Bigl\|\mathbb{E}\bigl[\beta\,\sigma(-s)\,d_{\hat\theta}\bigr]\Bigr\|
\ \le\
\frac{\beta}{4}\,\sqrt{\mathrm{Var}(s)}\,\sqrt{\mathbb{E}\|d_{\hat\theta}\|^2},
\qquad
\mathrm{Var}(s)=2\beta^2\,\mathrm{Var}_{Y\sim p_{\mathrm{SAT}}}\!\bigl(h(Y)\bigr),
\label{eq:spin-degen-main}
\end{equation}
where $h(y):=\ln\pi_{\hat\theta}(y\mid x)-\ln\pi_{\mathrm{ref}}(y\mid x)$ (Proposition~\ref{prop:spin} in Appendix). Thus, a small $\mathrm{Var}_{p_{\mathrm{SAT}}}(h)$ implies a weak training signal that can quantitatively \emph{degenerate}. By contrast, DRIFT maintains a non-vanishing signal and practical gains. Full discussions are in Appendix~\ref{app:spin_discussion}.


\begin{takeawaybox}[colback=gray!5!white, colframe=gray!75!black, 
title=\ Summary]
\ \begin{itemize}[leftmargin=0.15in]
    \item \emph{\textbf{Signal}:} If improvement events occur with nonzero probability, DRIFT’s expected gradient stays non-vanishing (Lemma~\ref{lem:exp-grad}). 
    \item \emph{\textbf{Utility}:} Under the local correlation, a small step along the expected DPO direction improves $J$ up to $O(\eta^2)$ (Theorem~\ref{thm:one-step}).
    \item \emph{\textbf{Contrast}:} At SPIN fixed points on a finite SAT catalogue, the signal is controlled by catalogue log–density-ratio variance and can \emph{degenerate} (Proposition~\ref{prop:spin}).  
\end{itemize}
\end{takeawaybox}

\section{Discussion}
\paragraph{Generalization.}
To test our method's generality, we applied the same training procedure to Gemma-3-12B-it, a multimodal and structurally distinct architecture. The results (see Appendix~\ref{app:detailed_results}, Table~\ref{tab:gemma_results}) show that DRIFT again outperformed SPIN and IterDPO across all WildBench categories, exhibiting similar stability and improvement patterns. These results confirm that DRIFT generalizes beyond a single model family and performs well even on different model architecture.

\paragraph{Safety implications of training with DSAT signals.}
Since DRIFT explicitly anchors on real user dissatisfaction, it is important to assess whether such supervision inadvertently amplifies adversarial vulnerabilities or demographic biases. Evaluations on AdvBench and ToxiGen (see Appendix~\ref{app:detailed_results}, Table~\ref{tab:safety_results}) show that DRIFT does not increase jailbreak success rates, toxicity, or group-specific harms relative to baseline models across iterations. 

Taken together, these results show that DRIFT’s asymmetric pairing of off-policy DSAT negatives with on-policy positives provides a stable and informative learning signal, generalizes across model families, and preserves baseline safety characteristics. Grounding preference optimization in real dissatisfaction thus offers a reliable and scalable direction for real-world post-training. 

\section{Conclusion}
\label{sec:conclusion}

Real-world post-training rarely comes with abundant golden positives; it comes with abundant dissatisfaction and iterative user edits. In this paper, we introduced \textsc{DRIFT}, a simple, scalable recipe that pairs authentic DSAT negatives with policy sampled positives, turning in-situ feedback into stable, exploration preserving updates. Empirically, on real-world user feedback dataset \textit{WildFeedback}, \textsc{DRIFT} outperforms SPIN and IterDPO on WildBench and AlpacaEval2 (with the stronger margins at larger base models); on synthetic LLM-labeled dataset \textit{UltraFeedback}, it retains its superiority. Exploratory capacity analysis indicates that DRIFT explores more diverse high-reward solutions rather than overfitting to a narrow region. Theoretically, we show that \textsc{DRIFT}’s admits a uniform, non-vanishing gradient lower bound, avoiding the collapse that arises when training concentrates probability on a finite fixed \emph{chosen} (Or SFT) set as in SPIN. Together, these results suggest DRIFT is a promising practical recipe for preference learning with real-world user feedback.

\newpage
\paragraph{Ethics Statement} 
This work relies on a publicly datasets \emph{WildFeedback} that contain anonymized human–LLM conversations. No personally identifiable information was collected or used. All experiments comply with dataset licenses and terms of use. The research has no foreseeable negative social or ethical impacts.

\paragraph{Reproducibility Statement}
We have made extensive efforts to ensure the reproducibility of our work. The main paper details our training setup, datasets, and evaluation metrics (Secs.~\ref{sec:setup}--\ref{sec:exploration}). Dataset construction and filtering steps are described in Sec.~\ref{sec:setup}, with references to the source corpora. Training hyperparameters, iteration procedures and training dynamics are presented in Appendix~\ref{app:implement}. To further facilitate verification and reuse, we will open-source our code, including data-processing pipelines, training scripts and analysis upon paper acceptance. 

\bibliographystyle{iclr2026_conference}
\bibliography{main}

@misc{shi2025wildfeedbackaligningllmsinsitu,
      title={WildFeedback: Aligning LLMs With In-situ User Interactions And Feedback}, 
      author={Taiwei Shi and Zhuoer Wang and Longqi Yang and Ying-Chun Lin and Zexue He and Mengting Wan and Pei Zhou and Sujay Jauhar and Sihao Chen and Shan Xia and Hongfei Zhang and Jieyu Zhao and Xiaofeng Xu and Xia Song and Jennifer Neville},
      year={2025},
      eprint={2408.15549},
      archivePrefix={arXiv},
      primaryClass={cs.CL},
      url={https://arxiv.org/abs/2408.15549}, 
}

@misc{zhao2024wildchat1mchatgptinteraction,
      title={WildChat: 1M ChatGPT Interaction Logs in the Wild}, 
      author={Wenting Zhao and Xiang Ren and Jack Hessel and Claire Cardie and Yejin Choi and Yuntian Deng},
      year={2024},
      eprint={2405.01470},
      archivePrefix={arXiv},
      primaryClass={cs.CL},
      url={https://arxiv.org/abs/2405.01470}, 
}

@misc{lin2024wildbenchbenchmarkingllmschallenging,
      title={WildBench: Benchmarking LLMs with Challenging Tasks from Real Users in the Wild}, 
      author={Bill Yuchen Lin and Yuntian Deng and Khyathi Chandu and Faeze Brahman and Abhilasha Ravichander and Valentina Pyatkin and Nouha Dziri and Ronan Le Bras and Yejin Choi},
      year={2024},
      eprint={2406.04770},
      archivePrefix={arXiv},
      primaryClass={cs.CL},
      url={https://arxiv.org/abs/2406.04770}, 
}

@misc{chen2024selfplayfinetuningconvertsweak,
      title={Self-Play Fine-Tuning Converts Weak Language Models to Strong Language Models}, 
      author={Zixiang Chen and Yihe Deng and Huizhuo Yuan and Kaixuan Ji and Quanquan Gu},
      year={2024},
      eprint={2401.01335},
      archivePrefix={arXiv},
      primaryClass={cs.LG},
      url={https://arxiv.org/abs/2401.01335}, 
}

@misc{tan2025aligninglargelanguagemodels,
      title={Aligning Large Language Models with Implicit Preferences from User-Generated Content}, 
      author={Zhaoxuan Tan and Zheng Li and Tianyi Liu and Haodong Wang and Hyokun Yun and Ming Zeng and Pei Chen and Zhihan Zhang and Yifan Gao and Ruijie Wang and Priyanka Nigam and Bing Yin and Meng Jiang},
      year={2025},
      eprint={2506.04463},
      archivePrefix={arXiv},
      primaryClass={cs.CL},
      url={https://arxiv.org/abs/2506.04463}, 
}

@misc{shaikh2025aligninglanguagemodelsdemonstrated,
      title={Aligning Language Models with Demonstrated Feedback}, 
      author={Omar Shaikh and Michelle S. Lam and Joey Hejna and Yijia Shao and Hyundong Cho and Michael S. Bernstein and Diyi Yang},
      year={2025},
      eprint={2406.00888},
      archivePrefix={arXiv},
      primaryClass={cs.CL},
      url={https://arxiv.org/abs/2406.00888}, 
}

@article{yuan2024self,
  title={Self-rewarding language models},
  author={Yuan, Weizhe and Pang, Richard Yuanzhe and Cho, Kyunghyun and Sukhbaatar, Sainbayar and Xu, Jing and Weston, Jason},
  journal={arXiv preprint arXiv:2401.10020},
  volume={3},
  year={2024}
}

@misc{pang2024iterativereasoningpreferenceoptimization,
      title={Iterative Reasoning Preference Optimization}, 
      author={Richard Yuanzhe Pang and Weizhe Yuan and Kyunghyun Cho and He He and Sainbayar Sukhbaatar and Jason Weston},
      year={2024},
      eprint={2404.19733},
      archivePrefix={arXiv},
      primaryClass={cs.CL},
      url={https://arxiv.org/abs/2404.19733}, 
}

@misc{xiong2024iterativepreferencelearninghuman,
      title={Iterative Preference Learning from Human Feedback: Bridging Theory and Practice for RLHF under KL-Constraint}, 
      author={Wei Xiong and Hanze Dong and Chenlu Ye and Ziqi Wang and Han Zhong and Heng Ji and Nan Jiang and Tong Zhang},
      year={2024},
      eprint={2312.11456},
      archivePrefix={arXiv},
      primaryClass={cs.LG},
      url={https://arxiv.org/abs/2312.11456}, 
}

@misc{xu2024thingscringeothersiterative,
      title={Some things are more CRINGE than others: Iterative Preference Optimization with the Pairwise Cringe Loss}, 
      author={Jing Xu and Andrew Lee and Sainbayar Sukhbaatar and Jason Weston},
      year={2024},
      eprint={2312.16682},
      archivePrefix={arXiv},
      primaryClass={cs.CL},
      url={https://arxiv.org/abs/2312.16682}, 
}

@misc{donyehiya2025naturallyoccurringfeedbackcommon,
      title={Naturally Occurring Feedback is Common, Extractable and Useful}, 
      author={Shachar Don-Yehiya and Leshem Choshen and Omri Abend},
      year={2025},
      eprint={2407.10944},
      archivePrefix={arXiv},
      primaryClass={cs.CL},
      url={https://arxiv.org/abs/2407.10944}, 
}

@misc{zheng2024lmsyschat1mlargescalerealworldllm,
      title={LMSYS-Chat-1M: A Large-Scale Real-World LLM Conversation Dataset}, 
      author={Lianmin Zheng and Wei-Lin Chiang and Ying Sheng and Tianle Li and Siyuan Zhuang and Zhanghao Wu and Yonghao Zhuang and Zhuohan Li and Zi Lin and Eric P. Xing and Joseph E. Gonzalez and Ion Stoica and Hao Zhang},
      year={2024},
      eprint={2309.11998},
      archivePrefix={arXiv},
      primaryClass={cs.CL},
      url={https://arxiv.org/abs/2309.11998}, 
}

@article{ouyang2022training,
  title={Training language models to follow instructions with human feedback},
  author={Ouyang, Long and Wu, Jeffrey and Jiang, Xu and Almeida, Diogo and Wainwright, Carroll and Mishkin, Pamela and Zhang, Chong and Agarwal, Sandhini and Slama, Katarina and Ray, Alex and others},
  journal={Advances in neural information processing systems},
  volume={35},
  pages={27730--27744},
  year={2022}
}

@article{rafailov2023direct,
  title={Direct preference optimization: Your language model is secretly a reward model},
  author={Rafailov, Rafael and Sharma, Archit and Mitchell, Eric and Manning, Christopher D and Ermon, Stefano and Finn, Chelsea},
  journal={Advances in neural information processing systems},
  volume={36},
  pages={53728--53741},
  year={2023}
}

@article{schulman2017proximal,
  title={Proximal policy optimization algorithms},
  author={Schulman, John and Wolski, Filip and Dhariwal, Prafulla and Radford, Alec and Klimov, Oleg},
  journal={arXiv preprint arXiv:1707.06347},
  year={2017}
}

@misc{lounamaa2024explicit,
  author       = {Tuomas Lounamaa},
  title        = {Explicit and implicit LLM user feedback: A quick guide},
  year         = {2024},
  month        = {May},
  day          = {16},
  url          = {https://www.nebuly.com/blog/explicit-implicit-llm-user-feedback-quick-guide},
  note         = {Accessed: 2025-09-10},
  organization = {Nebuly}
}

@article{lin2024interpretable,
  title={Interpretable user satisfaction estimation for conversational systems with large language models},
  author={Lin, Ying-Chun and Neville, Jennifer and Stokes, Jack W and Yang, Longqi and Safavi, Tara and Wan, Mengting and Counts, Scott and Suri, Siddharth and Andersen, Reid and Xu, Xiaofeng and others},
  journal={arXiv preprint arXiv:2403.12388},
  year={2024}
}

@inproceedings{hancock-etal-2019-learning,
    title = "Learning from Dialogue after Deployment: Feed Yourself, Chatbot!",
    author = "Hancock, Braden  and
      Bordes, Antoine  and
      Mazare, Pierre-Emmanuel  and
      Weston, Jason",
    editor = "Korhonen, Anna  and
      Traum, David  and
      M{\`a}rquez, Llu{\'i}s",
    booktitle = "Proceedings of the 57th Annual Meeting of the Association for Computational Linguistics",
    month = jul,
    year = "2019",
    address = "Florence, Italy",
    publisher = "Association for Computational Linguistics",
    url = "https://aclanthology.org/P19-1358/",
    doi = "10.18653/v1/P19-1358",
    pages = "3667--3684",
    abstract = "The majority of conversations a dialogue agent sees over its lifetime occur after it has already been trained and deployed, leaving a vast store of potential training signal untapped. In this work, we propose the self-feeding chatbot, a dialogue agent with the ability to extract new training examples from the conversations it participates in. As our agent engages in conversation, it also estimates user satisfaction in its responses. When the conversation appears to be going well, the user{'}s responses become new training examples to imitate. When the agent believes it has made a mistake, it asks for feedback; learning to predict the feedback that will be given improves the chatbot{'}s dialogue abilities further. On the PersonaChat chit-chat dataset with over 131k training examples, we find that learning from dialogue with a self-feeding chatbot significantly improves performance, regardless of the amount of traditional supervision."
}

@misc{gao2024aligningllmagentslearning,
      title={Aligning LLM Agents by Learning Latent Preference from User Edits}, 
      author={Ge Gao and Alexey Taymanov and Eduardo Salinas and Paul Mineiro and Dipendra Misra},
      year={2024},
      eprint={2404.15269},
      archivePrefix={arXiv},
      primaryClass={cs.CL},
      url={https://arxiv.org/abs/2404.15269}, 
}

@misc{liu2025userfeedbackhumanllmdialogues,
      title={User Feedback in Human-LLM Dialogues: A Lens to Understand Users But Noisy as a Learning Signal}, 
      author={Yuhan Liu and Michael J. Q. Zhang and Eunsol Choi},
      year={2025},
      eprint={2507.23158},
      archivePrefix={arXiv},
      primaryClass={cs.CL},
      url={https://arxiv.org/abs/2507.23158}, 
}

@misc{wang2025creamconsistencyregularizedselfrewarding,
      title={CREAM: Consistency Regularized Self-Rewarding Language Models}, 
      author={Zhaoyang Wang and Weilei He and Zhiyuan Liang and Xuchao Zhang and Chetan Bansal and Ying Wei and Weitong Zhang and Huaxiu Yao},
      year={2025},
      eprint={2410.12735},
      archivePrefix={arXiv},
      primaryClass={cs.LG},
      url={https://arxiv.org/abs/2410.12735}, 
}

@misc{wang2025temporalselfrewardinglanguagemodels,
      title={Temporal Self-Rewarding Language Models: Decoupling Chosen-Rejected via Past-Future}, 
      author={Yidong Wang and Xin Wang and Cunxiang Wang and Junfeng Fang and Qiufeng Wang and Jianing Chu and Xuran Meng and Shuxun Yang and Libo Qin and Yue Zhang and Wei Ye and Shikun Zhang},
      year={2025},
      eprint={2508.06026},
      archivePrefix={arXiv},
      primaryClass={cs.CL},
      url={https://arxiv.org/abs/2508.06026}, 
}

@misc{tu2025enhancingllmreasoningiterative,
      title={Enhancing LLM Reasoning with Iterative DPO: A Comprehensive Empirical Investigation}, 
      author={Songjun Tu and Jiahao Lin and Xiangyu Tian and Qichao Zhang and Linjing Li and Yuqian Fu and Nan Xu and Wei He and Xiangyuan Lan and Dongmei Jiang and Dongbin Zhao},
      year={2025},
      eprint={2503.12854},
      archivePrefix={arXiv},
      primaryClass={cs.CL},
      url={https://arxiv.org/abs/2503.12854}, 
}

@inproceedings{tucker2024coactive,
  title={Coactive learning for large language models using implicit user feedback},
  author={Tucker, Aaron David and Brantley, Kiant{\'e} and Cahall, Adam and Joachims, Thorsten},
  booktitle={Forty-first International Conference on Machine Learning},
  year={2024}
}

@article{chen2024bootstrapping,
  title={Bootstrapping language models with dpo implicit rewards},
  author={Chen, Changyu and Liu, Zichen and Du, Chao and Pang, Tianyu and Liu, Qian and Sinha, Arunesh and Varakantham, Pradeep and Lin, Min},
  journal={arXiv preprint arXiv:2406.09760},
  year={2024}
}

@inproceedings{Bai_2024,
   title={MT-Bench-101: A Fine-Grained Benchmark for Evaluating Large Language Models in Multi-Turn Dialogues},
   url={http://dx.doi.org/10.18653/v1/2024.acl-long.401},
   DOI={10.18653/v1/2024.acl-long.401},
   booktitle={Proceedings of the 62nd Annual Meeting of the Association for Computational Linguistics (Volume 1: Long Papers)},
   publisher={Association for Computational Linguistics},
   author={Bai, Ge and Liu, Jie and Bu, Xingyuan and He, Yancheng and Liu, Jiaheng and Zhou, Zhanhui and Lin, Zhuoran and Su, Wenbo and Ge, Tiezheng and Zheng, Bo and Ouyang, Wanli},
   year={2024},
   pages={7421–7454} }

@inproceedings{zeng2025aries,
  title={ARIES: Stimulating Self-Refinement of Large Language Models with and for Iterative Preference Optimization},
  author={Zeng, Yongcheng and Jin, Xuanfa and Liu, Guoqing and He, Quan and Li, Dong and Hao, Jianye and Zhang, Haifeng and Wang, Jun},
  booktitle={Workshop on Reasoning and Planning for Large Language Models},
  year={2025}
}

@misc{chen2024learningreasonselfiterativeprocess,
      title={Learning to Reason via Self-Iterative Process Feedback for Small Language Models}, 
      author={Kaiyuan Chen and Jin Wang and Xuejie Zhang},
      year={2024},
      eprint={2412.08393},
      archivePrefix={arXiv},
      primaryClass={cs.CL},
      url={https://arxiv.org/abs/2412.08393}, 
}

@article{wang2025more,
  title={More is Less: The Pitfalls of Multi-Model Synthetic Preference Data in DPO Safety Alignment},
  author={Wang, Yifan and Chen, Runjin and Li, Bolian and Cho, David and Deng, Yihe and Zhang, Ruqi and Chen, Tianlong and Wang, Zhangyang and Grama, Ananth and Hong, Junyuan},
  journal={arXiv preprint arXiv:2504.02193},
  year={2025}
}

\clearpage
\appendix

\section{Disclosure of LLM Use in Paper Preparation}
We acknowledge the use of LLMs for assistance with writing and polishing text. All the content suggested by LLMs in writing was proofread and manually adjusted before being integrated into the final manuscript. The authors take full responsibility for the accuracy and factuality of all content presented.

\section{Theoretical Proofs}
\label{app:theory}

\subsection{Assumptions}
\label{app:assumptions}

\begin{enumerate}[label=(A\arabic*),leftmargin=3em,itemsep=4pt]

\item \textbf{Improvement event (data-level).}
\label{assump:imp}
There exist $\tau\in(0,\tfrac12]$, $m_r>0$, and $p_{\mathrm{imp}}>0$ such that, with
\[
E_{\mathrm{imp}}
\;=\;
\bigl\{\ \sigma(-s)\ge \tau\ \text{ and }\ r^\star(x,y^+)-r^\star(x,y^-)\ge m_r\ \bigr\},
\]
one has $\mathbb{P}(E_{\mathrm{imp}})\ge p_{\mathrm{imp}}$.

\item \textbf{Local smoothness of $J$.}
\label{assump:smooth}
There exists $L_J<\infty$ such that, for all sufficiently small $v$,
\[
J(\theta+v)\ \ge\ J(\theta)+\langle \nabla J(\theta),v\rangle-\tfrac{L_J}{2}\,\|v\|^2.
\]

\item \textbf{Finite second moment for the score difference.}
\label{assump:moment}
$\mathbb{E}\,\|d_\theta\|^2\ \le\ C_d < \infty$.
\end{enumerate}

\subsection{Proof of Theorem~\ref{thm:one-step} (Expected improvement of \texorpdfstring{$J$}{J}))}
\label{thm:app-one-step}

\begin{theorem*}[Restatement of Theorem~\ref{thm:one-step}]
Under Assumptions~\ref{assump:imp}, \ref{assump:smooth}, \ref{assump:moment}, and the local advantage correlation condition Eq.\ref{eq:local-align}, for any $\eta>0$ and $g(\theta)=\beta\,\mathbb{E}[\sigma(-s)\,d_\theta]$,
\[
\mathbb{E}\bigl[J(\theta+\eta\,g(\theta))\bigr]
\ \ge\
J(\theta)
\ +\ \eta\,\beta\,\tau\,p_{\mathrm{imp}}\,\lambda
\ -\ \frac{L_J}{2}\,\eta^2\,\beta^2\,C_d.
\]
\end{theorem*}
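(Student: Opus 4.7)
The plan is to combine three ingredients that are each already half-assembled in the paper: the $L_J$-smoothness inequality from Assumption (A2), a lower bound on the population inner product $\langle \nabla J(\theta), g(\theta)\rangle$ driven by the improvement event (A1), and an upper bound on $\|g(\theta)\|^2$ that turns Assumption (A3) into a clean $\beta^2 C_d$ constant. The structure will follow the sketch in the main text verbatim, but with the dependences on the event $E_{\mathrm{imp}}$ and on the unconditional moment $C_d$ spelled out explicitly.

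First, I would apply (A2) with the displacement vector $v = \eta g(\theta)$ to obtain the pointwise inequality $J(\theta + \eta g(\theta)) \ge J(\theta) + \eta \langle \nabla J(\theta), g(\theta)\rangle - \tfrac{L_J}{2}\eta^2 \|g(\theta)\|^2$. Since $g(\theta) = \beta\,\mathbb{E}[\sigma(-s)\,d_\theta]$ is already a population mean, the outer expectation in the statement is trivial on the left; I would then focus on bounding the two scalar quantities on the right. For the linear term, I would substitute the definition of $g$ to write $\langle \nabla J(\theta), g(\theta)\rangle = \beta\,\mathbb{E}[\sigma(-s)\langle \nabla J(\theta), d_\theta\rangle]$, split the expectation along $E_{\mathrm{imp}}$ and its complement, and use $\sigma(-s) \ge \tau$ on $E_{\mathrm{imp}}$ together with the local alignment condition Eq.~\eqref{eq:local-align} to conclude $\langle \nabla J(\theta), g(\theta)\rangle \ge \beta\,\tau\,p_{\mathrm{imp}}\,\lambda$ by the tower property.

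For the quadratic term, I would apply Jensen's inequality to push the norm inside the expectation, yielding $\|g(\theta)\|^2 \le \beta^2\,\mathbb{E}[\sigma(-s)^2 \|d_\theta\|^2]$, and then use $\sigma(-s) \le 1$ together with (A3) to reduce this to $\beta^2 C_d$. Substituting both bounds into the smoothness inequality produces the stated conclusion $J(\theta) + \eta \beta \tau p_{\mathrm{imp}} \lambda - \tfrac{L_J}{2}\eta^2 \beta^2 C_d$.

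The only delicate step, and the one I expect to be the main obstacle, is controlling the contribution of $E_{\mathrm{imp}}^c$ in the linear term: $\sigma(-s)$ is nonnegative, but $\langle \nabla J(\theta), d_\theta\rangle$ could in principle be negative on that event, so naively dropping it does not automatically give a lower bound. The cleanest way to handle this is to strengthen the local alignment condition to hold unconditionally with constant $p_{\mathrm{imp}}\lambda$ after marginalizing over $E_{\mathrm{imp}}^c$, or equivalently to assume the complement contributes a nonnegative amount in expectation. I would state this as a small addendum to (A1)/Eq.~\eqref{eq:local-align} in the appendix and verify that it is implicit in the sketch already provided in the main text, so that the rest of the derivation goes through without further complication.
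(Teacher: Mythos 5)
Your proposal follows essentially the same route as the paper's own proof: apply the $L_J$-smoothness inequality with $v=\eta\,g(\theta)$, lower-bound the linear term by writing $\langle\nabla J,g\rangle=\beta\,\mathbb{E}[\sigma(-s)\langle\nabla J,d_\theta\rangle]$ and restricting to $E_{\mathrm{imp}}$ where $\sigma(-s)\ge\tau$ together with Eq.~\eqref{eq:local-align}, and bound the quadratic term by $\beta^2 C_d$ via Jensen, $\sigma\le 1$, and (A3). The ``delicate step'' you flag --- that dropping the contribution of $E_{\mathrm{imp}}^{c}$ requires $\mathbb{E}\bigl[\sigma(-s)\langle\nabla J,d_\theta\rangle\mathbf{1}_{E_{\mathrm{imp}}^{c}}\bigr]\ge 0$, which nonnegativity of $\sigma(-s)$ alone does not give --- is present but left implicit in the paper's proof as well, so your proposed addendum strengthening Eq.~\eqref{eq:local-align} is a reasonable patch rather than a deviation from the paper's argument.
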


\begin{proof}
By definition, $g(\theta)=\beta\,\mathbb{E}\bigl[\sigma(-s)\,d_\theta\bigr]$. Taking inner product with $\nabla J(\theta)$ and then expectation,
\[
\mathbb{E}\,\langle \nabla J(\theta),\, g(\theta)\rangle
=
\beta\,\mathbb{E}\Bigl[\sigma(-s)\,\langle \nabla J(\theta), d_\theta\rangle\Bigr].
\]
On the improvement event from Assumption~\ref{assump:imp}, $\sigma(-s)\ge \tau$; conditioning on $E_{\mathrm{imp}}$ and using Eq.\ref{eq:local-align},
\[
\mathbb{E}\,\langle \nabla J(\theta),\, g(\theta)\rangle
\ \ge\
\beta\,\tau\,\mathbb{P}(E_{\mathrm{imp}})\,
\mathbb{E}\bigl[\langle \nabla J(\theta), d_\theta\rangle \,\bigm|\, E_{\mathrm{imp}}\bigr]
\ \ge\
\beta\,\tau\,p_{\mathrm{imp}}\,\lambda.
\]
By the $L_J$-smoothness in Assumption~\ref{assump:smooth},
\[
J(\theta+\eta g)\ \ge\ J(\theta)+\eta\,\langle \nabla J(\theta), g\rangle - \tfrac{L_J}{2}\,\eta^2\,\|g\|^2 .
\]
Taking expectations and combining the previous bound gives
\[
\mathbb{E}\,J(\theta+\eta g)
\ \ge\
J(\theta)
\ +\ \eta\,\beta\,\tau\,p_{\mathrm{imp}}\,\lambda
\ -\ \tfrac{L_J}{2}\,\eta^2\,\mathbb{E}\,\|g\|^2 .
\]
It remains to bound $\mathbb{E}\,\|g(\theta)\|^2$. Since $\sigma\in(0,1)$,
\[
\|g(\theta)\|
=\bigl\|\beta\,\mathbb{E}[\sigma(-s)\,d_\theta]\bigr\|
\le \beta\,\mathbb{E}\|d_\theta\|
\le \beta\,\sqrt{\mathbb{E}\|d_\theta\|^2}
\le \beta\,\sqrt{C_d},
\]
where we used Assumption~\ref{assump:moment} and Jensen. Hence $\mathbb{E}\,\|g(\theta)\|^2\le \beta^2 C_d$, which yields the stated inequality.
\end{proof}

\subsection{SPIN real-world performance discussion}
\label{app:spin_discussion}
SPIN updates concentrate probability on the finite SAT catalogue $\mathrm{SAT}(x)$ and, under a trust-region style update, admit the closed-form iteration (see, e.g., \citep{chen2024selfplayfinetuningconvertsweak}):
\begin{equation}
p_{\theta_{t+1}}(y \mid x)
\;\propto\;
p_{\theta_t}(y \mid x)
\Bigl[\frac{p_{\mathrm{SAT}}(y \mid x)}
            {p_{\theta_t}(y \mid x)}\Bigr]^{1/\lambda},
\tag{5.3}\label{eq:spin-update}
\end{equation}
which drives $p_{\theta_t}(\cdot\mid x)$ toward $p_{\mathrm{SAT}}(\cdot\mid x)$ supported on $\mathrm{SAT}(x)$. Consequently, at any fixed point $\hat\theta$ one has $\pi_{\hat\theta}(\cdot\mid x)=p_{\mathrm{SAT}}(\cdot\mid x)$; under the SPIN data rule (positive sampled from $p_{\mathrm{SAT}}$, negative sampled independently from $\pi_{\hat\theta}$ given $x$), the positive $y^+$ and negative $y^-$ are conditionally i.i.d.\ on $\mathrm{SAT}(x)$ with the common marginal $p_{\mathrm{SAT}}$. The DPO signal then depends on the \emph{variance} of the log-density ratio on that finite set and can quantitatively degenerate.

\begin{proposition}[Quantitative degeneration at a SPIN fixed point]
\label{prop:spin}
At a SPIN fixed point $\hat\theta$ with $\pi_{\hat\theta}(\cdot|x)=p_{\mathrm{SAT}}(\cdot|x)$, if $y^+,y^-$ are conditionally i.i.d.\ from $\pi_{\hat\theta}(\cdot|x)$ and $\mathbb{E}\|d_{\hat\theta}\|^2<\infty$, then for $h(y):=\ln\pi_{\hat\theta}(y\mid x)-\ln\pi_{\mathrm{ref}}(y\mid x)$,
\begin{equation}
\Bigl\|
\mathbb{E}\bigl[\beta\,\sigma(-s)\,d_{\hat\theta}\bigr]
\Bigr\|
\ \le\
\frac{\beta}{4}\,\sqrt{\mathrm{Var}(s)}\;\sqrt{\mathbb{E}\|d_{\hat\theta}\|^2},
\qquad
\mathrm{Var}(s)=2\beta^2\,\mathrm{Var}_{Y\sim p_{\mathrm{SAT}}}\!\bigl(h(Y)\bigr).
\end{equation}
\begin{proof}
By (A5), for each $x$ we have $\mathbb{E}[d_{\hat\theta}\mid x]=0$ since $y^+,y^-$ are i.i.d.\ under $\pi_{\hat\theta}(\cdot|x)$. Therefore
\[
\mathbb{E}\bigl[\beta\,\sigma(-s)\,d_{\hat\theta}\bigr]
=
\beta\,\mathbb{E}\Bigl[\bigl(\sigma(-s)-\mathbb{E}\sigma(-s)\bigr)\,d_{\hat\theta}\Bigr].
\]
Fix any unit vector $u$. Scalar Cauchy--Schwarz yields
\[
u^\top \mathbb{E}\bigl[\beta\,\sigma(-s)\,d_{\hat\theta}\bigr]
=
\beta\,\mathbb{E}\Bigl[\bigl(\sigma(-s)-\mathbb{E}\sigma(-s)\bigr)\,u^\top d_{\hat\theta}\Bigr]
\ \le\
\beta\,\sqrt{\mathrm{Var}(\sigma(-s))}\ \sqrt{\mathbb{E}\bigl[(u^\top d_{\hat\theta})^2\bigr]}.
\]
Taking the supremum over all unit $u$,
\[
\Bigl\|
\mathbb{E}\bigl[\beta\,\sigma(-s)\,d_{\hat\theta}\bigr]
\Bigr\|
\ \le\
\beta\,\sqrt{\mathrm{Var}(\sigma(-s))}\ \sqrt{\mathbb{E}\|d_{\hat\theta}\|^2}.
\]
Since $\sigma$ is $1/4$-Lipschitz, $\mathrm{Var}(\sigma(-s))\le \tfrac{1}{16}\,\mathrm{Var}(s)$. With $s=\beta[h(y^+)-h(y^-)]$ and $y^+,y^-$ i.i.d.,
\[
\mathrm{Var}(s)
=
\beta^2\,\mathrm{Var}\bigl(h(y^+)-h(y^-)\bigr)
=
2\,\beta^2\,\mathrm{Var}_{Y\sim p_{\mathrm{SAT}}}\bigl(h(Y)\bigr),
\]
which gives the stated inequality.
\end{proof}

\end{proposition}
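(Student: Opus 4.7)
The plan is to exploit the conditional exchangeability of $(y^+, y^-)$ at the SPIN fixed point to expose a hidden cancellation, then reduce the vector-valued bound to a scalar Cauchy--Schwarz inequality, and finally control $\mathrm{Var}(\sigma(-s))$ via Lipschitz continuity of $\sigma$.

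First I would observe that at the fixed point $\pi_{\hat\theta}(\cdot\mid x) = p_{\mathrm{SAT}}(\cdot\mid x)$, the SPIN sampling rule makes $y^+$ and $y^-$ conditionally i.i.d.\ given $x$. Since $d_{\hat\theta} = \nabla\ln\pi_{\hat\theta}(y^+\mid x) - \nabla\ln\pi_{\hat\theta}(y^-\mid x)$ flips sign under the swap $(y^+ \leftrightarrow y^-)$, conditional exchangeability yields $\mathbb{E}[d_{\hat\theta}\mid x] = 0$. Consequently, for any constant $c$, $\mathbb{E}[\beta\sigma(-s)\,d_{\hat\theta}] = \mathbb{E}[\beta(\sigma(-s) - c)\,d_{\hat\theta}]$, and I would pick $c = \mathbb{E}\sigma(-s)$ so that the scalar multiplier is centered. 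Without this centering, a direct Cauchy--Schwarz would leave an unbounded term $\mathbb{E}\sigma(-s)\cdot\mathbb{E}\|d_{\hat\theta}\|$, losing the variance-based sharpness that drives the claimed degeneration.

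Second, to turn the vector norm into a scalar bound I would dualize by projecting onto an arbitrary unit direction $u$ and applying scalar Cauchy--Schwarz to $\mathbb{E}\bigl[(\sigma(-s) - \mathbb{E}\sigma(-s))\, u^\top d_{\hat\theta}\bigr]$. Taking the supremum over $u$ and using $\mathbb{E}[(u^\top d_{\hat\theta})^2] \le \mathbb{E}\|d_{\hat\theta}\|^2$ yields the factor $\sqrt{\mathrm{Var}(\sigma(-s))}\,\sqrt{\mathbb{E}\|d_{\hat\theta}\|^2}$. Since $\sigma$ is $\tfrac14$-Lipschitz, $\mathrm{Var}(\sigma(-s)) \le \tfrac{1}{16}\mathrm{Var}(s)$, producing the $\beta/4$ prefactor. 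Then I would compute $\mathrm{Var}(s)$ directly: writing $s = \beta[h(y^+) - h(y^-)]$ and using that $y^+, y^-$ are i.i.d.\ under $p_{\mathrm{SAT}}$, the variance of a difference of two i.i.d.\ copies collapses to $2\beta^2 \mathrm{Var}_{Y\sim p_{\mathrm{SAT}}}(h(Y))$; the reference-model log-density enters only through $h$, so no extra bookkeeping is needed.

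The main obstacle is cleanly justifying $\mathbb{E}[d_{\hat\theta}\mid x] = 0$ from the fixed-point assumption and the SPIN sampling rule; everything afterwards is mechanical. If one relaxes the identity $\pi_{\hat\theta}(\cdot\mid x) = p_{\mathrm{SAT}}(\cdot\mid x)$ (e.g., only approximate fixed point) or if the SPIN construction breaks exchangeability between $y^+$ and $y^-$, the centering trick fails and the bound weakens by an additive term controlled by $\|\mathbb{E} d_{\hat\theta}\|$. A natural robustness extension would be to carry an approximation error $\varepsilon_{\mathrm{TV}}$ between $\pi_{\hat\theta}$ and $p_{\mathrm{SAT}}$ through the same argument, which would add an $O(\varepsilon_{\mathrm{TV}})$ correction but preserve the variance-driven degeneration conclusion.
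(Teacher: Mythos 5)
Your proposal is correct and follows essentially the same route as the paper's proof: exploit conditional i.i.d.-ness at the fixed point to get $\mathbb{E}[d_{\hat\theta}\mid x]=0$, center the scalar factor at $\mathbb{E}\sigma(-s)$, dualize over unit vectors and apply scalar Cauchy--Schwarz, then use the $1/4$-Lipschitz bound $\mathrm{Var}(\sigma(-s))\le\tfrac{1}{16}\mathrm{Var}(s)$ and the i.i.d. variance identity $\mathrm{Var}(s)=2\beta^2\mathrm{Var}_{p_{\mathrm{SAT}}}(h)$. Your explicit sign-flip/exchangeability justification of the centering step is, if anything, slightly more careful than the paper's citation of its assumption.
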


\clearpage
\section{Additional Results}
\label{app:detailed_results}

\subsection{WildBench Leaderboard Full Table}
\label{app:leaderboard}

\begin{table*}[ht]
\centering
\caption{Complete WildBench leaderboard showing all evaluated models with comprehensive task-specific scores across multiple evaluation dimensions. Our models are highlighted in gray.}
\label{tab:full_leaderboard}
\renewcommand{\arraystretch}{1.1}
\scalebox{0.75}{
\begin{tabular}{l|c|c|c|c|c|c|c|c}
\toprule
\textbf{Model} & \textbf{Elo} & \textbf{Task} & \textbf{Creative} & \textbf{Reasoning} & \textbf{Math} & \textbf{Info Seek} & \textbf{Coding} & \textbf{Length} \\
\midrule
GPT-4o (2024-05-13) & 1256.86 & 59.30 & 59.12 & 60.21 & 57.29 & 58.61 & 60.47 & 3723.52 \\
Claude-3.5-Sonnet (20240620) & 1238.93 & 54.70 & 55.61 & 55.64 & 50.16 & 55.54 & 56.51 & 2911.85 \\
GPT-4-Turbo (2024-04-09) & 1233.99 & 55.22 & 58.66 & 56.20 & 51.00 & 57.18 & 55.07 & 3093.17 \\
Gemini-1.5-Pro & 1228.55 & 52.95 & 55.12 & 53.73 & 48.59 & 52.23 & 55.22 & 3247.97 \\
DeepSeek-v2-Chat (0628) & 1221.66 & 53.99 & 56.43 & 54.83 & 51.43 & 52.72 & 55.00 & 3252.38 \\
GPT-4 (0125-preview) & 1221.30 & 52.28 & 57.57 & 53.45 & 45.79 & 54.36 & 52.92 & 3335.64 \\
Claude-3-Opus (20240229) & 1219.27 & 51.71 & 53.02 & 52.53 & 46.75 & 53.47 & 53.30 & 2685.98 \\
\rowcolor{gray!20}
Qwen2.5-14B-UltraFeedback-DRIFT-iter2 & 1218.75 & 59.27 & 59.48 & 61.05 & 59.44 & 58.66 & 57.64 & 4275.27 \\
Mistral-Large-2 & 1217.99 & 55.57 & 58.86 & 57.22 & 52.67 & 57.38 & 53.84 & 3503.63 \\
\rowcolor{gray!20}
Qwen2.5-14B-WildFeedback-DRIFT-iter2 & 1217.61 & 58.30 & 58.14 & 60.60 & 59.37 & 58.37 & 55.19 & 4492.77 \\
GPT-4o-mini (2024-07-18) & 1217.35 & 57.14 & 60.05 & 58.24 & 54.05 & 57.43 & 57.17 & 3648.13 \\
\rowcolor{gray!20}
Qwen2.5-14B-WildFeedback-DRIFT-iter1-4k & 1215.73 & 58.37 & 58.55 & 60.39 & 58.89 & 58.86 & 55.57 & 4528.15 \\
\rowcolor{gray!20}
Qwen2.5-14B-UltraFeedback-DRIFT-iter1 & 1215.67 & 58.52 & 58.65 & 60.30 & 58.02 & 57.57 & 57.64 & 4288.24 \\
\rowcolor{gray!20}
Qwen2.5-14B-UltraFeedback-IterDPO-iter2 & 1215.12 & 54.78 & 55.35 & 56.76 & 54.00 & 55.00 & 53.08 & 3152.61 \\
\rowcolor{gray!20}
Qwen2.5-14B-UltraFeedback-IterDPO-iter1 & 1214.14 & 54.21 & 55.13 & 56.27 & 52.02 & 55.74 & 52.64 & 3424.60 \\
\rowcolor{gray!20}
Qwen2.5-14B-WildFeedback-DRIFT-iter2-4k & 1214.03 & 57.59 & 57.98 & 59.73 & 57.45 & 57.62 & 55.38 & 5333.00 \\
\rowcolor{gray!20}
Qwen2.5-14B-WildFeedback-Seed & 1213.50 & 54.93 & 54.94 & 57.30 & 53.60 & 55.45 & 53.30 & 3878.89 \\
Qwen2.5-14B-Instruct & 1213.17 & 55.08 & 55.71 & 57.84 & 54.98 & 54.95 & 52.23 & 3682.95 \\
\rowcolor{gray!20}
Qwen2.5-14B-WildFeedback-DRIFT-iter1 & 1212.83 & 57.27 & 58.09 & 59.31 & 55.78 & 57.22 & 56.13 & 4485.35 \\
\rowcolor{gray!20}
Qwen2.5-14B-WildFeedback-IterDPO-iter2-4k & 1211.69 & 56.63 & 56.18 & 59.19 & 56.51 & 56.78 & 54.25 & 4491.05 \\
DeepSeek-v2-Coder (0628) & 1206.89 & 45.66 & 40.78 & 47.17 & 46.43 & 40.05 & 48.87 & 2580.18 \\
Gemini-1.5-Flash & 1206.77 & 48.85 & 51.66 & 50.79 & 45.32 & 48.67 & 48.73 & 3654.40 \\
\rowcolor{gray!20}
Qwen2.5-14B-WildFeedback-IterDPO-iter1-4k & 1206.65 & 51.79 & 50.75 & 54.49 & 51.24 & 52.67 & 49.43 & 3925.74 \\
\rowcolor{gray!20}
Qwen2.5-14B-WildFeedback-IterDPO-iter2 & 1206.63 & 51.38 & 50.23 & 54.17 & 51.03 & 52.48 & 48.68 & 4011.82 \\
\rowcolor{gray!20}
Qwen2.5-14B-WildFeedback-IterDPO-iter1 & 1205.48 & 52.34 & 52.87 & 54.97 & 51.71 & 54.06 & 48.96 & 4054.28 \\
DeepSeek-v2-Chat & 1205.02 & 48.21 & 53.59 & 50.63 & 44.52 & 51.81 & 44.43 & 2896.97 \\
\rowcolor{gray!20}
Qwen2.5-14B-WildFeedback-SPIN-iter1 & 1200.63 & 47.16 & 49.90 & 49.75 & 47.25 & 47.67 & 43.11 & 2707.18 \\
\rowcolor{gray!20}
Qwen2.5-7B-WildFeedback-DRIFT-iter2 & 1199.09 & 51.69 & 52.45 & 53.21 & 50.63 & 52.38 & 50.28 & 4707.48 \\
\rowcolor{gray!20}
Qwen2.5-7B-UltraFeedback-DRIFT-iter2 & 1197.94 & 50.32 & 50.39 & 52.50 & 48.41 & 52.08 & 48.58 & 5121.20 \\
\rowcolor{gray!20}
Qwen2.5-7B-WildFeedback-DRIFT-4k-iter1 & 1197.13 & 51.06 & 53.23 & 53.44 & 48.32 & 52.28 & 49.29 & 4517.02 \\
\rowcolor{gray!20}
Qwen2.5-7B-UltraFeedback-DRIFT-iter1 & 1197.04 & 50.91 & 50.08 & 53.77 & 48.92 & 52.72 & 48.87 & 4856.83 \\
\rowcolor{gray!20}
Qwen2.5-7B-WildFeedback-DRIFT-4k-iter2 & 1195.33 & 51.06 & 51.42 & 53.45 & 49.16 & 52.13 & 49.38 & 4686.39 \\
\rowcolor{gray!20}
Qwen2.5-7B-WildFeedback-DRIFT-iter1 & 1194.81 & 50.61 & 52.09 & 53.03 & 48.56 & 52.13 & 48.34 & 4652.38 \\
Qwen2.5-7B-Instruct & 1194.67 & 48.66 & 50.08 & 51.80 & 47.09 & 50.69 & 45.00 & 4275.08 \\
\rowcolor{gray!20}
Qwen2.5-7B-UltraFeedback-IterDPO-iter1 & 1194.45 & 48.77 & 49.35 & 51.39 & 46.03 & 50.89 & 46.82 & 3888.79 \\
\rowcolor{gray!20}
Qwen2.5-7B-WildFeedback-DRIFT-iter2-RPO & 1194.24 & 50.42 & 51.89 & 53.31 & 48.88 & 51.49 & 47.52 & 4711.01 \\
\rowcolor{gray!20}
Qwen2.5-7B-WildFeedback-Seed & 1193.66 & 49.11 & 49.35 & 51.57 & 47.54 & 50.15 & 47.17 & 4624.30 \\
Nemotron-4-340B-Instruct & 1193.60 & 47.67 & 53.32 & 49.13 & 40.80 & 53.00 & 46.26 & 2754.01 \\
\rowcolor{gray!20}
Qwen2.5-14B-WildFeedback-SPIN-iter2 & 1192.56 & 44.04 & 45.54 & 45.95 & 40.64 & 45.89 & 43.13 & 2820.38 \\
\rowcolor{gray!20}
Qwen2.5-7B-WildFeedback-IterDPO-iter2-4k & 1192.46 & 48.94 & 50.39 & 51.35 & 46.37 & 50.99 & 46.79 & 4731.32 \\
Claude-3-Sonnet (20240229) & 1192.03 & 45.48 & 46.30 & 47.43 & 40.64 & 47.13 & 46.10 & 2670.24 \\
\rowcolor{gray!20}
Qwen2.5-7B-UltraFeedback-IterDPO-iter2 & 1192.01 & 48.49 & 50.18 & 50.73 & 44.68 & 50.22 & 47.58 & 3708.61 \\
Qwen2-72B-Instruct & 1189.43 & 44.50 & 49.92 & 46.85 & 40.95 & 49.50 & 39.81 & 2856.45 \\
\rowcolor{gray!20}
Qwen2.5-7B-WildFeedback-IterDPO-iter1-4k & 1189.43 & 47.07 & 48.63 & 49.13 & 44.50 & 49.26 & 45.12 & 4602.08 \\
Mistral-Nemo-Instruct (2407) & 1187.35 & 44.38 & 54.57 & 47.41 & 35.63 & 51.93 & 39.72 & 3318.21 \\
\rowcolor{gray!20}
Qwen2.5-7B-WildFeedback-IterDPO-iter1 & 1185.11 & 46.31 & 46.77 & 48.61 & 44.46 & 48.02 & 44.27 & 4662.34 \\
\rowcolor{gray!20}
Qwen2.5-7B-WildFeedback-IterDPO-iter2 & 1182.33 & 46.17 & 45.74 & 49.22 & 45.60 & 46.24 & 43.70 & 4520.58 \\
\rowcolor{gray!20}
Qwen2.5-7B-WildFeedback-SPIN-iter1 & 1180.75 & 42.86 & 43.26 & 45.45 & 41.59 & 46.29 & 39.06 & 3611.19 \\
\rowcolor{gray!20}
Qwen2.5-14B-UltraFeedback-SPIN-iter1 & 1178.88 & 36.99 & 38.86 & 40.09 & 35.16 & 39.01 & 33.40 & 2642.64 \\
Claude-3-Haiku (20240307) & 1175.97 & 38.89 & 42.95 & 41.29 & 31.43 & 45.35 & 36.98 & 2601.03 \\
\rowcolor{gray!20}
Qwen2.5-7B-WildFeedback-SPIN-iter2 & 1173.45 & 37.86 & 37.36 & 40.72 & 37.06 & 42.13 & 33.27 & 2839.09 \\
Mistral-Large (2402) & 1172.18 & 38.89 & 49.66 & 41.80 & 30.88 & 46.14 & 33.74 & 2514.98 \\
\rowcolor{gray!20}
Qwen2.5-7B-UltraFeedback-SPIN-iter1 & 1163.16 & 35.10 & 34.78 & 37.61 & 30.87 & 39.90 & 33.21 & 3475.16 \\
Qwen1.5-72B-Chat-Greedy & 1160.02 & 39.93 & 50.36 & 43.45 & 29.80 & 48.22 & 35.36 & 2392.36 \\
\rowcolor{gray!20}
Qwen2.5-14B-UltraFeedback-SPIN-iter2 & 1155.07 & 28.01 & 29.77 & 31.69 & 25.08 & 32.72 & 23.13 & 2504.57 \\
Mixtral-8x7B-Instruct-v0.1 & 1145.51 & 31.47 & 42.75 & 34.59 & 22.14 & 41.94 & 25.02 & 2653.58 \\
\rowcolor{gray!20}
Qwen2.5-7B-UltraFeedback-SPIN-iter2 & 1139.66 & 25.93 & 25.79 & 28.88 & 19.60 & 32.03 & 24.43 & 3293.30 \\
GPT-3.5-Turbo (0125) & 1135.69 & 30.02 & 37.42 & 33.39 & 21.59 & 36.49 & 26.54 & 1844.14 \\
\bottomrule
\end{tabular}}
\end{table*}

\subsection{Unguided Ablation Study}

\begin{table*}[ht]
\centering
\caption{Unguided ablation study results on Qwen2.5-7B (full \textit{WildFeedback} setting). Unguided variants remove the reference DSAT response and explicit improvement instruction, matching SPIN's setting.}
\label{tab:unguided_ablation}
\renewcommand{\arraystretch}{1.1}
\begin{tabular}{l|cc}
\toprule
\textbf{Method} & \textbf{WildBench Score} & \textbf{AlpacaEval2 WinRate} \\
\midrule
\rowcolor{gray!20}
\multicolumn{3}{l}{\textit{SPIN}} \\
~~iter1 & 42.86 & 26.21 \\
~~iter2 & 37.86 & 20.56 \\
\midrule
\rowcolor{gray!20}
\multicolumn{3}{l}{\textit{IterDPO}} \\
~~iter1 (Unguided) & 49.44 & 40.30 \\
~~iter2 (Unguided) & 49.66 & 41.24 \\
~~iter1 (Full) & 46.31 & 40.36 \\
~~iter2 (Full) & 46.17 & 35.76 \\
\midrule
\rowcolor{gray!20}
\multicolumn{3}{l}{\textit{DRIFT (Ours)}} \\
~~iter1 (Unguided) & 50.77 & \textbf{46.77} \\
~~iter2 (Unguided) & 51.04 & 45.47 \\
~~iter1 (Full) & 50.61 & 43.90 \\
~~iter2 (Full) & \textbf{51.69} & 46.64 \\
\bottomrule
\end{tabular}
\end{table*}

\subsection{Generalization Across Model Families}

\begin{table*}[ht]
\centering
\caption{Experiment results on Gemma-3-12B-it (full \textit{WildFeedback} setting). All methods were trained using the same recipe and evaluated on WildBench. Bold indicates the highest score for each metric.}
\label{tab:gemma_results}
\renewcommand{\arraystretch}{1.1}
\scalebox{1.0}{
\begin{tabular}{l|c|c|c|c|c|c}
\toprule
\textbf{Method} & \textbf{Task} & \textbf{Creative} & \textbf{Reasoning} & \textbf{Math} & \textbf{Info Seek} & \textbf{Coding} \\
\midrule
Base & 60.77 & 66.10 & 62.46 & 54.10 & \textbf{65.21} & 59.62 \\
\midrule
\rowcolor{gray!20}
\multicolumn{7}{l}{\textit{SPIN}} \\
~~iter1 & 59.06 & 65.79 & 61.14 & 50.40 & 63.96 & 58.30 \\
~~iter2 & 51.86 & 58.76 & 53.81 & 42.06 & 59.60 & 50.33 \\
\midrule
\rowcolor{gray!20}
\multicolumn{7}{l}{\textit{IterDPO}} \\
~~iter1 & 60.10 & 62.74 & 60.48 & 54.92 & 61.29 & \textbf{62.10} \\
~~iter2 & 48.98 & 52.40 & 49.27 & 43.89 & 47.61 & 52.23 \\
\midrule
\rowcolor{gray!20}
\multicolumn{7}{l}{\textit{DRIFT (Ours)}} \\
~~iter1 & \textbf{61.73} & \textbf{66.37} & \textbf{63.02} & \textbf{56.19} & 64.71 & 61.23 \\
~~iter2 & 60.88 & 66.10 & 62.07 & 54.76 & 62.82 & 61.33 \\
\bottomrule
\end{tabular}}
\end{table*}

\subsection{Safety and Ethical Considerations}

\begin{table*}[htp]
\centering
\caption{Safety and ethical norms evaluation results. AdvBench measures adversarial jailbreak attack success rate (\%). ToxiGen measures toxicity scores (1--5 scale). Lower values indicate better safety.}
\label{tab:safety_results}
\renewcommand{\arraystretch}{1.1}
\begin{tabular}{l|cc}
\toprule
\textbf{Method}  & \textbf{Attack Success Rate (\%)} & \textbf{Toxic Score (1-5)} \\
\midrule
Base & 0.01 & 1.56 \\
\midrule
\rowcolor{gray!20}
\multicolumn{3}{l}{\textit{SPIN}} \\
~~iter1 & 0.01 & 1.65 \\
~~iter2 & 0.00 & 1.70 \\
\midrule
\rowcolor{gray!20}
\multicolumn{3}{l}{\textit{IterDPO}} \\
~~iter1 & 0.02 & 1.50 \\
~~iter2 & 0.00 & 1.66 \\
\midrule
\rowcolor{gray!20}
\multicolumn{3}{l}{\textit{DRIFT (Ours)}} \\
~~iter1 & 0.02 & 1.54 \\
~~iter2 & 0.01 & 1.58 \\
\bottomrule
\end{tabular}
\end{table*}

\clearpage
\section{Implementation Details}
\label{app:implement}

\subsection{Warm Start Training Details}\label{warmstart_implementation}
We curate a DSAT$\to$SAT seed set (491 pairs) from WildFeedback, where a dissatisfied user turn (DSAT) is followed by a revised model response that satisfies the user (SAT). Each pair provides a natural preference: the DSAT response fails to meet expectations, while the subsequent SAT response is preferred.

For our warm start phase, we initialize training using pre-trained instruction-tuned models as the base models. The warm start training utilizes seed preference data to establish initial alignment before iterative refinement. We did DPO training with carefully tuned hyperparameters to ensure stable convergence. All experiments were conducted on 8 H100 GPUs with the same hardware configuration maintained across all training phases. The detailed hyperparameters for warm start training are presented in Table~\ref{tab:warmstart_hyperparams}.

\begin{table}[ht]
\centering
\caption{Warm Start Training Hyperparameters}
\label{tab:warmstart_hyperparams}
\begin{tabular}{lccccccc}
\toprule
Learning rate &Batch size & $\beta$  & Optimizer & LR scheduler  & Seq length & Epochs & Precision \\
\midrule
5.0e-7 &4 & 0.1 & RMSprop & Linear & 2048 & 3 & bfloat16 \\
\bottomrule
\end{tabular}
\end{table}

\subsection{Iterative Training Details}\label{iterative_implementation}

After the warm start phase, we conducted iterative training to progressively refine model alignment using dynamically generated preference data. Data generation details are in Sec.~\ref{sec:setup}. Each iteration builds upon the previous model checkpoint, incorporating newly created preference data. The iterative training process maintains consistent hyperparameters across iterations, with only the training data and base model checkpoint changing between iterations. We trained each iteration for a single epoch to prevent overfitting on the iteratively generated data. Table~\ref{tab:iterative_hyperparams} details the hyperparameters used for iterative training phases.

\begin{table}[ht]
\centering
\caption{Iterative Training Hyperparameters}
\label{tab:iterative_hyperparams}
\begin{tabular}{lccccccccc}
\toprule
Learning rate &Batch size & $\beta$ & Optimizer & LR scheduler & Seq length & Epochs & Precision \\
\midrule
5.0e-7 &4 & 0.1 & RMSprop & Linear & 2048 & 1 & bfloat16 \\
\bottomrule
\end{tabular}
\end{table}

\subsection{Training Dynamics}
\label{training_dynamics}

For better training illustration, we report the Qwen2.5-14B-Instruct DRIFT iter1 \& iter2 training dynamics in Figure~\ref{fig:training_dynamics} which shows dpo training loss, chosen reward, and rejected reward. The loss curves exhibit stable convergence across both iterations. The reward signals show the expected separation pattern: chosen rewards consistently increase while rejected rewards decrease. This trend is observed in both iterations, confirming the effectiveness of the training. 

\begin{figure}[ht]
    \centering
    \includegraphics[width=\textwidth]{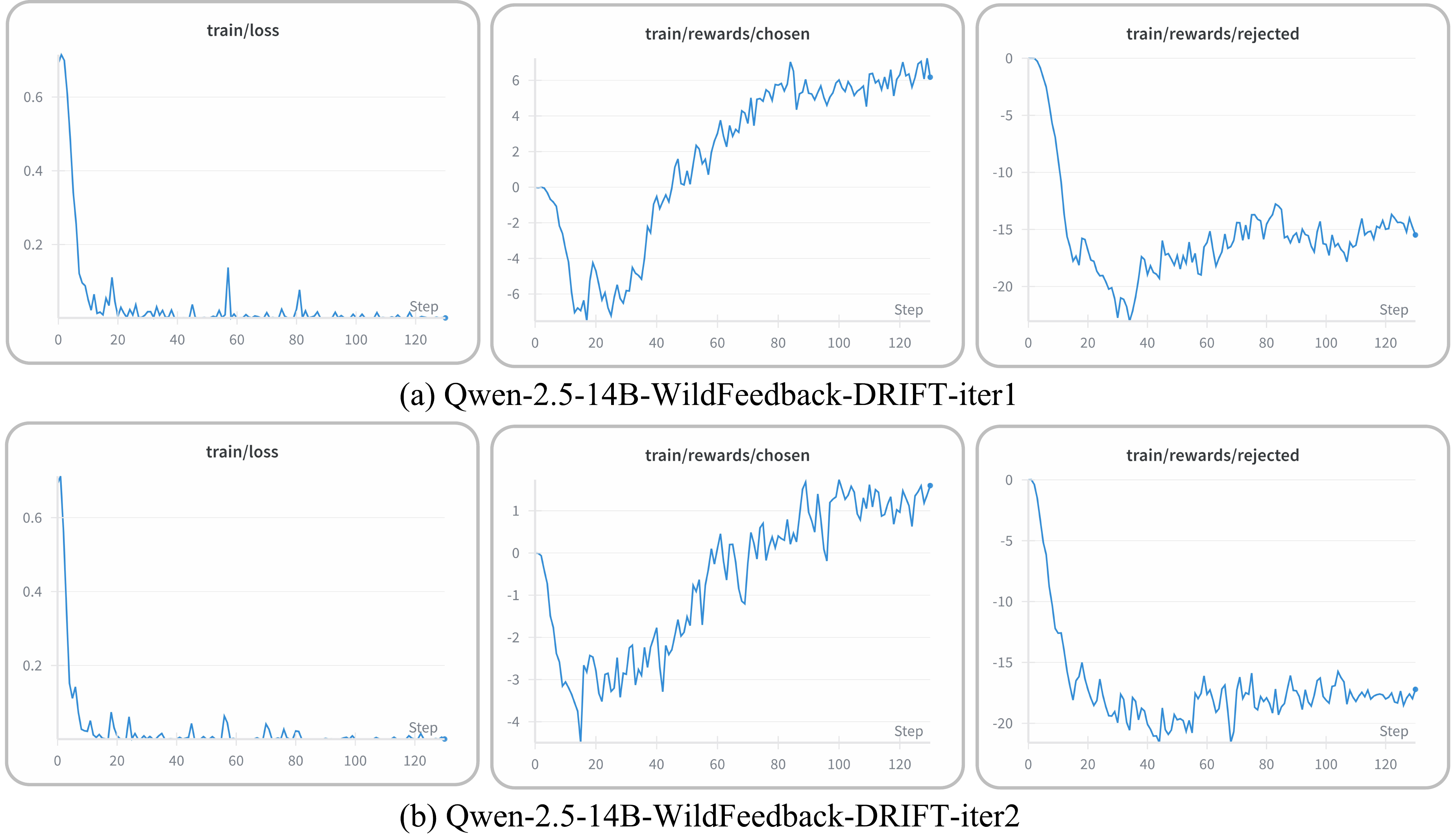}
    \caption{The top row shows DRIFT training dynamics for iteration 1 on Qwen2.5-14B-Instruct. The bottom row shows the training dynamics for iteration 2.}
    \label{fig:training_dynamics}
\end{figure}

\section{Model Responses Example}
\label{app:example}

\begin{figure}[H]
    \centering
    \begin{boxK}
    \textbf{Prompt:} Using at least two different scientific sources, generate a research paper that analyzes and compares the form and function of three different types of carnivorous plants, including but not limited to Venus flytraps, pitcher plants, and sundews. The paper should include detailed descriptions of the physical characteristics of each plant, as well as an analysis of their unique adaptations for capturing and digesting prey. In addition, the paper should explore the evolutionary history and ecological significance of carnivorous plants, and discuss current research on their use in medicine and biotechnology. The paper should be written in APA format and include at least 10 peer-reviewed references. \\[8pt]

    \textbf{SPIN Representative Response:} \\
    Title: Carnivorous Plants: Form, Function, Evolutionary History, and Applications\\

    Introduction:

    Carnivorous plants are fascinating organisms that have evolved unique adaptations to capture and digest prey. These plants are found in nutrient-poor environments where soil lacks sufficient nitrogen and phosphorus. The three most common types of carnivorous plants are Venus flytraps (Dionaea muscipula), pitcher plants (Nepenthes spp.), and sundews (Drosera spp.). This research paper will analyze and compare the form and function of these three types of carnivorous plants, as well as their evolutionary history and ecological significance.\\

    Physical Characteristics and Adaptations:

    Venus flytraps are known for their trap-like leaves that can close rapidly when triggered by insects. Each leaf has six trigger hairs that, when touched twice within 20 seconds, cause the trap to close. Once closed, the trap secretes digestive enzymes to break down the insect. Venus flytraps are native to North Carolina and South Carolina and require specific environmental conditions, such as high humidity and acidic soil, to thrive.\\

    Pitcher plants are named for their modified leaves that resemble pitchers or cups. These leaves are covered in nectar and have a slippery surface that makes it difficult for insects to escape once they enter. Once inside, the insect is digested by enzymes secreted by the plant. Pitcher plants are found throughout the world, from tropical rainforests to temperate regions, and come in various shapes and sizes. Nepenthes rafflesiana, also known as the Rafflesia pitcher plant, is one of the largest pitcher plants, with pitchers up to 30 centimeters tall.\\

    Sundews are named for their sticky glandular hairs that resemble dewdrops. When an insect lands on the plant, the sticky hairs trap it, and the plant secretes digestive enzymes to break down the insect. Sundews are found in many parts of the world, including North America, Europe, and Asia. Drosera capensis, also known as the Cape sundew, is a popular species used in horticulture due to its attractive appearance and ease of care.\\

    Evolutionary History and Ecological Significance:

    The evolutionary history of carnivorous plants is still not fully understood, but it is believed that they evolved independently multiple times across different plant lineages. Carnivorous plants likely evolved in nutrient-poor environments as a way to supplement their nutrition through the capture and digestion of insects. 

    ... 
    \\[8pt]

    \textbf{GPT-5 Score: 5/10}\\
    \textbf{Explanation:} Covers the basic features of the plants but lacks depth; references are not standardized (limited to general descriptions rather than APA format); missing an abstract and overall not well structured as a research paper.
    \end{boxK}
    \caption{SPIN model response example.}
    \label{fig:spin_example}
\end{figure}

\begin{figure}[H]
    \centering
    \begin{boxK}
    \textbf{Prompt:} Using at least two different scientific sources, generate a research paper that analyzes and compares the form and function of three different types of carnivorous plants, including but not limited to Venus flytraps, pitcher plants, and sundews. The paper should include detailed descriptions of the physical characteristics of each plant, as well as an analysis of their unique adaptations for capturing and digesting prey. In addition, the paper should explore the evolutionary history and ecological significance of carnivorous plants, and discuss current research on their use in medicine and biotechnology. The paper should be written in APA format and include at least 10 peer-reviewed references. \\[8pt]

    \textbf{IterDPO Representative Response:} \\
    Title: Carnivorous Plants: Form, Function, and Evolutionary Significance\\

    Abstract:

    Carnivorous plants are fascinating organisms that have evolved unique mechanisms to capture and digest prey. This paper compares the form and function of three different types of carnivorous plants: Venus flytraps (Dionaea muscipula), pitcher plants (Nepenthes spp.), and sundews (Drosera spp.). We also discuss the evolutionary history and ecological significance of these plants, as well as their potential applications in medicine and biotechnology.\\

    Introduction:

    Carnivorous plants are found in nutrient-poor environments where other sources of nitrogen and phosphorus are scarce. They have evolved specialized trapping mechanisms to obtain these nutrients from insects and small animals. Venus flytraps, pitcher plants, and sundews are three examples of carnivorous plants with distinct trapping strategies.\\

    Venus Flytraps:

    Venus flytraps are native to North Carolina and South Carolina in the United States. Their leaves contain trigger hairs that, when touched twice, cause the leaf to snap shut, trapping the insect inside (Gurevitch et al., 2017). Once trapped, the insect is digested by enzymes secreted by the plant. Venus flytraps have evolved this mechanism to supplement their nutrient intake in nutrient-poor soil (Hunt \& Raker, 2018).\\

    Pitcher Plants:

    Pitcher plants are found in tropical and subtropical regions worldwide. They have evolved a modified leaf that forms a deep, liquid-filled pitfall trap (Kress \& Wurdack, 2006). Insects are attracted to the nectar produced by the pitcher and fall into the liquid, where they are digested by enzymes and bacteria. Pitcher plants have evolved this mechanism to obtain nutrients from a wide range of prey, including ants, flies, and spiders (Rice, 2014).\\

    Sundews:

    Sundews are found in temperate and tropical regions worldwide. They have evolved sticky glands on their leaves that trap insects (Orozco-terWengel \& Brodersen, 2015). Once trapped, the insect is digested by enzymes secreted by the plant. Sundews have evolved this mechanism to obtain nutrients from a wide range of prey, including flies, mosquitoes, and ants (Rice, 2014). 
    
    ... 
    \\[8pt]

    \textbf{GPT-5 Score: 6/10}\\
    \textbf{Explanation:} Presents a clearer structure with some academic tone, but the discussion is still surface-level and misses key elements like applications and conclusion.
    \end{boxK}
    \caption{IterDPO model response example.}
    \label{fig:iterdpo_example}
\end{figure}

\begin{figure}[H]
    \centering
    \begin{boxK}
    \textbf{Prompt:} Using at least two different scientific sources, generate a research paper that analyzes and compares the form and function of three different types of carnivorous plants, including but not limited to Venus flytraps, pitcher plants, and sundews. The paper should include detailed descriptions of the physical characteristics of each plant, as well as an analysis of their unique adaptations for capturing and digesting prey. In addition, the paper should explore the evolutionary history and ecological significance of carnivorous plants, and discuss current research on their use in medicine and biotechnology. The paper should be written in APA format and include at least 10 peer-reviewed references. \\[8pt]

\textbf{DRIFT Representative Response:}

**Title: Comparative Analysis of Carnivorous Plant Adaptations: Venus Flytraps, Pitcher Plants, and Sundews** \\

**Abstract**

Carnivorous plants have evolved remarkable adaptations to survive in nutrient-poor environments by capturing and digesting insects. This paper compares the form and function of three prominent carnivorous plants: *Dionaea muscipula* (Venus flytrap), *Nepenthes spp.* (pitcher plants), and *Drosera spp.* (sundews). We analyze their physical characteristics, trapping mechanisms, and digestive processes. Additionally, we discuss their evolutionary history, ecological roles, and potential applications in medicine and biotechnology. Our review synthesizes data from multiple studies to provide a comprehensive understanding of these fascinating organisms.\\
  
**Introduction**

Carnivorous plants are specialized angiosperms that have developed unique strategies to obtain nutrients from animal prey (Lloyd, 1983). They thrive in nutrient-poor soils, where traditional photosynthetic pathways alone cannot sustain growth. Three notable examples of carnivorous plants are the Venus flytrap (*Dionaea muscipula*), pitcher plants (*Nepenthes spp.*), and sundews (*Drosera spp.*). Each species has distinct morphological features and physiological mechanisms adapted for prey capture and digestion.\\

**Venus Flytrap (*Dionaea muscipula*)**\\
The Venus flytrap is characterized by its hinged leaves with sensitive trigger hairs. When an insect touches these hairs, the leaf rapidly closes within seconds (Braun \& Speck, 2006). The internal structure includes a complex system of mechanical sensors and hydraulic pressures that facilitate rapid closure (Lischka et al., 2014).

**Pitcher Plants (*Nepenthes spp.*)**\\
Pitcher plants possess modified leaves forming a deep, fluid-filled trap. Insects are attracted to the nectar produced at the pitcher's entrance and fall into the digestive fluid below (Givnish, 2000). The pitchers contain enzymes and acids that break down the prey (Bramwell \& Chaloner, 1975).

**Sundews (*Drosera spp.*)**\\
Sundews are covered in sticky glandular hairs that ensnare insects. Once an insect becomes stuck, the plant secretes digestive enzymes to dissolve the prey (O’Leary et al., 2010).

    ... 
    \\[8pt]

\textbf{GPT-5 Score: 8/10}\\
\textbf{Explanation:} The format is closer to an academic paper (title, abstract, introduction, subsections) and using markdown format; covers morphology, function, evolution, ecology, and applications; relatively comprehensive and consistent with a review style.
    \end{boxK}
    \caption{DRIFT model response example.}
\end{figure}

\end{document}